\documentclass[11pt,reqno]{amsart}
\usepackage[top=1.2in, bottom=1.2in, left=1.2in, right=1.2in]{geometry}
\usepackage{amsfonts, amssymb, amsmath, mathtools, dsfont}
\usepackage[linktoc=page, colorlinks, linkcolor=blue, citecolor=blue]{hyperref}
\usepackage{enumerate, commath}

\usepackage{graphicx,xcolor, abstract}
\usepackage[font=small]{caption} 

\usepackage{algorithm,algpseudocode}

\numberwithin{equation}{section}

\mathtoolsset{showonlyrefs}

\DeclareMathOperator{\E}{\mathbb{E}}

\DeclareMathOperator{\tr}{tr}

\DeclareMathOperator{\Lap}{Lap}

\DeclareMathOperator{\den}{\mathrm{den}}

\DeclareMathOperator*{\argmin}{arg\,min}

\renewcommand{\Pr}[2][]{\mathbb{P}_{#1} \left\{\, #2 \rule{0mm}{3mm}\right\}}

\def \R {\mathbb{R}}

\def \Z {\mathbb{Z}}

\def \e {\varepsilon}
\def \d {\delta}
\def \l {\lambda}
\def \s {\sigma}

\def \top {\mathsf{T}}

\newcommand{\add}[1]{{#1}}


\newtheorem{theorem}{Theorem}[section]
\newtheorem{proposition}[theorem]{Proposition}

\newtheorem{lemma}[theorem]{Lemma}

\newtheorem{definition}[theorem]{Definition}

\newtheorem{remark}[theorem]{Remark}

\newcommand{\cA}{\mathcal{M}}

\newcommand{\cR}{\mathcal{R}}

\newcommand{\dd}{\mathrm{d}}

\renewcommand{\>}{\right>}

\newcommand{\1}{\mathbf{1}}
\newcommand{\bA}{\mathbf{A}}
\newcommand{\bM}{\mathbf{M}}
\newcommand{\bV}{\mathbf{V}}
\newcommand{\bX}{\mathbf{X}}
\newcommand{\bY}{\mathbf{Y}}
\newcommand{\bZ}{\mathbf{Z}}

\title{Differentially Private Low-dimensional Synthetic Data from High-dimensional Datasets}

%

\renewcommand{\paragraph}[1]{\subsection*{#1}}

\author{Yiyun He}
\address{Department of Mathematics, University of California Irvine}
\email{yiyunh4@uci.edu}

\author{Thomas Strohmer}
\address{Department of Mathematics  and Center of Data Science
and Artificial Intelligence Research, University of California Davis}
\email{strohmer@math.ucdavis.edu}

\author{Roman Vershynin}
\address{Department of Mathematics, University of California Irvine}
\email{rvershyn@uci.edu}

\author{Yizhe Zhu}
\address{Department of Mathematics, University of California Irvine}
\email{yizhe.zhu@uci.edu}

\usepackage{times}

\begin{document}

\maketitle

\begin{abstract}
    {Differentially private synthetic data provide a powerful mechanism to enable data analysis while protecting sensitive information about individuals. However, when the data lie in a high-dimensional space, the accuracy of the synthetic data suffers from the curse of dimensionality. In this paper, we propose a differentially private algorithm to generate low-dimensional synthetic data efficiently from a high-dimensional dataset with a utility guarantee with respect to the Wasserstein distance. A key step of our algorithm is a private principal component analysis (PCA) procedure with a near-optimal accuracy bound that circumvents the curse of dimensionality. Unlike the standard perturbation analysis, our analysis of private PCA works without assuming the spectral gap for the covariance matrix.}
\end{abstract}

\section{Introduction}\label{sec:intro}
As data sharing is increasingly locking horns with data privacy concerns, privacy-preserving data analysis is becoming a challenging task with far-reaching impact.
Differential privacy (DP) has emerged as the gold standard for implementing privacy in various applications \cite{dwork2014algorithmic}.  For instance, DP has been adopted by several technology companies~\cite{dwork2019differential} and has also been used in connection with the release of Census 2020 data~\cite{abowd20222020}.  The motivation behind the concept of differential privacy is the desire to
protect an individual's data while publishing aggregate information about the database, as formalized in the following definition:
\begin{definition}[Differential Privacy~\cite{dwork2014algorithmic}]  \label{def: DP}
  A randomized algorithm ${\mathcal M}$ is $\e$-differentially private
 if for any \add{pair of} datasets $D$ and $D'$ \add{that differ on one data (i.e. $D=D_0\cup \{X\}$ and $D'=D_0\cup\{X'\}$ for some dataset $D_0$), } 
 and any measurable subset $S \subseteq \textnormal{range}({\mathcal M})$, we have
 $$
 \Pr{{\mathcal M}(D) \in S} \le e^{\e} \, \Pr{{\mathcal M}(D') \in S},
 $$
where the probability is with respect to the randomness of  ${\mathcal M}$.
\end{definition}

However, utility guarantees for DP are usually provided only for a fixed, predefined set of queries.  Hence, it has been frequently recommended that differential privacy may be combined with synthetic data to achieve more flexibility in private data sharing~\cite{wasserman2010statistical,hardt2012simple,bellovin2019privacy}. Synthetic datasets are generated from existing datasets and maintain the statistical properties of the original dataset. Hence, the datasets can be shared freely
among investigators in academia or industry, without security and privacy concerns.

Yet, computationally efficient construction of accurate differentially private synthetic data is challenging.
Most research on private synthetic data has been concerned with counting queries, range queries, or $k$-dimensional marginals, see, e.g.,~\cite{hardt2012simple,ullman2011pcps,blum2013learning,vietriprivate,dwork2015efficient,thaler2012faster,boedihardjo2022private2}. Notable exceptions are~\cite{wang2016differentially,boedihardjo2022private,donhauser2023sample}.
Specifically,~\cite{boedihardjo2022private} provides utility guarantees with respect to the 1-Wasserstein distance. Invoking the Kantorovich-Rubinstein duality theorem, the  1-Wasserstein distance accuracy bound ensures that all Lipschitz statistics are preserved uniformly. Given that numerous machine learning algorithms are Lipschitz \cite{von2004distance,kovalev2022lipschitz,bubeck2021universal,meunier2022dynamical}, this provides data analysts with a vastly increased toolbox of machine learning methods for which one can expect similar outcomes for the original and synthetic data.  

For instance, for the special case of datasets living on the $d$-dimensional Boolean hypercube \add{$\{0,1\}^d$} equipped with the Hamming distance, the results in~\cite{boedihardjo2022private} show that there exists an $\e$-DP algorithm with an expected utility loss that scales like
\begin{equation}        \label{eq:utility1}
\left(\log(\e n)^\frac{3}{2}/(\e n) \right)^{1/d},
\end{equation}
where $n$ is the size of the dataset. While~\cite{he2023algorithmically} succeeded in removing the logarithmic factor in~\eqref{eq:utility1}, it can be shown that the rate in~\eqref{eq:utility1} is otherwise tight. Consequently, the utility guarantees in~\cite{boedihardjo2022private,he2023algorithmically} are only useful when $d$, the dimension of the data, is small (or if $n$ is exponentially larger than $d$). In other words, we are facing the curse of dimensionality.
The curse of dimensionality extends beyond challenges associated with Wasserstein distance utility guarantees.  Even with a weaker accuracy requirement, the hardness result from Uhlman and Vadhan \cite{ullman2011pcps} shows that $n=\mathrm{poly}(d)$ is necessary for generating DP-synthetic data in polynomial time while maintaining approximate covariance.

In~\cite{donhauser2023sample}, the authors succeeded in constructing DP synthetic data with utility bounds where $d$ in \eqref{eq:utility1} is replaced by $(d'+1)$, assuming that the dataset lies in a certain $d'$-dimensional subspace. \add{Their notion of dimension is similar to the Minkowski dimension, and their method is applicable beyond the linear subspace setting}. However, the optimization step in their algorithm exhibits exponential time complexity in $d$, \add{see \cite[Section D]{donhauser2023sample}}. 

This paper presents a computationally efficient algorithm that does not rely on any assumptions about the true data. We demonstrate that our approach enhances the utility bound from $d$ to $d'$ in \eqref{eq:utility1} when the dataset is in a $d'$-dimensional affine subspace. Specifically, we derive a DP algorithm to generate low-dimensional synthetic data from a high-dimensional dataset with a utility guarantee with respect to the 1-Wasserstein distance that captures the intrinsic dimension of the data.

Our approach revolves around a private principal component analysis (PCA) procedure with a near-optimal accuracy bound that circumvents the curse of dimensionality. Different from classical perturbation analysis \cite{chaudhuri2013near,dwork2014analyze} that utilizes the Davis-Kahan theorem \cite{davis1970rotation} in the literature, our accuracy analysis of private PCA  works without assuming the spectral gap for the covariance matrix. 

\paragraph{Notation}
In this paper, we work with data in the Euclidean space $\R^d$.  For convenience, the data matrix $\bX=[X_1,\dots, X_n]\in \R^{d\times n}$ also indicates the dataset $(X_1,\dots,X_n)$.
 We use $\bA$ to denote a matrix and $v, X$ as vectors. $\|\cdot \|_F$ denotes the Frobenius norm and $\| \cdot \|$ is the operator norm of a matrix.  Two sequences $a_n,b_n$ satisfies $a_n\lesssim b_n$ if $a_n\leq Cb_n$ for an absolute constant $C>0$.

\paragraph{Organization of the paper} The rest of the paper is arranged as follows. In the remainder of  Section~\ref{sec:intro},  we present our algorithm with the \add{main theorem} for privacy and accuracy guarantees in Section \ref{sec:main_results}, followed by a discussion. A comparison to the state of the art is given in Section~\ref{sec:comparison}.  Definitions and lemmas used in the paper are provided in Section~\ref{sec: preliminaries}.

Next, we consider the Algorithm~\ref{alg: affine} step by step. Section~\ref{sec: projection} discusses private PCA and noisy projection. In Section~\ref{sec: subroutines}, we modify synthetic data algorithms from \cite{he2023algorithmically} to the specific cases on the lower dimensional spaces. The precise privacy and accuracy guarantee of Algorithm~\ref{alg: affine} is summarized in Section~\ref{sec: main theorem}. 
\add{We discuss an adaptive and private choice of $d'$ in Section~\ref{sec: d'choice}}.
Finally, since the case  $d'=1$ is not covered in Theorem \ref{thm:main}, we discuss additional results under stronger assumptions in Section~\ref{sec: d'=1}.


\subsection{Main results}\label{sec:main_results}
 In this paper, we use Definition~\ref{def: DP} on data matrix $\bX \in \R^{d\times n}$. We say two data matrices $\bX, \bX'$ are  \textit{neighboring datasets} if $\bX$ and $\bX'$ differ on only one column.
We follow the setting and notation in \cite{he2023algorithmically} as follows. let $(\Omega,\rho)$ be a metric space. Consider a dataset $\bX=[X_1,\dots ,X_n]\in \Omega^n$. We aim to construct a computationally efficient differentially private randomized algorithm that outputs synthetic data $\bY=[Y_1,\dots ,Y_n]\in \Omega^m$  such that the two empirical measures
\[  \mu_{\bX}=\frac{1}{n} \sum_{i=1}^n \delta_{X_i} \quad \text{and} \quad \mu_{\bY}=\frac{1}{m} \sum_{i=1}^m \delta_{Y_i}\]
 are close to each other. Here $\delta_{X_i}$ denotes the Dirac measure centered on $X_i$.
 
 We measure the utility of the output by $\E W_1(\mu_{\bX},\mu_{\bY})$, where the expectation is taken over the randomness of the algorithm.  
 We assume that each vector in the original dataset $\bX$ is inside  $[0,1]^d$; our goal is to generate a differentially private synthetic dataset $\bY$ in $[0,1]^d$, where each vector is close to a linear subspace of dimension $d'$, and the empirical measure of $\bY$ is close to $\bX$ under the 1-Wasserstein distance.  We introduce Algorithm~\ref{alg: affine} as a computationally efficient algorithm for this task.
 It can be summarized in the following four steps:
\begin{enumerate}
    \item Construct a private covariance matrix $\widehat{\bM}$. The private covariance is constructed by adding a Laplacian random matrix to a centered covariance matrix $\bM$ defined as  
    \begin{equation}
    \label{eq: sample_cov}
    \bM = \frac{1}{n-1}\sum_{i=1}^n (X_i - \overline{X})(X_i-\overline X)^\top, \quad \text{where} \quad \overline{X}=\frac{1}{n} \sum_{i=1}^n X_i. 
    \end{equation}
    This step is presented in Algorithm~\ref{alg: covariance}.
    \item Find a $d'$-dimensional subspace $\widehat{\bV}_{d'}$ by taking the top $d'$ eigenvectors of $\widehat{\bM}$. Then, project the data onto a linear subspace.   The new data obtained in this way are inside a $d'$-dimensional ball. This step is summarized in Algorithm~\ref{alg: projection}.
    \item Generate a private measure in the $d'$ dimensional ball centered at the origin by adapting methods in \cite{he2023algorithmically}, where synthetic data generation algorithms were analyzed for data in the hypercube. This is summarized in Algorithms~\ref{alg: pmm} and ~\ref{alg: psmm}.
    \item Add a private mean vector to shift the dataset back to a private affine subspace. Given the transformations in earlier steps, some synthetic data points might lie outside the hypercube. We then metrically project them back to the domain of the hypercube. Finally, we output the resulting dataset $\bY$.  This is summarized in the last two parts of Algorithm~\ref{alg: affine}.
\end{enumerate}



\add{Our main theorem} states the privacy and accuracy guarantees of Algorithm \ref{alg: affine}. 

\begin{theorem}\label{thm:main}
    Let $\Omega=[0,1]^d$ equipped with $\ell^{\infty}$ metric and $\bX=[X_1,\dots, X_n]\in \Omega^n$ be a dataset. 
    For any $2\leq d'\leq d$, Algorithm~\ref{alg: affine} outputs an $\e$-differentially private synthetic dataset $\bY=[Y_1,\dots, Y_m]\in \Omega^{m}$ for some $m\geq 1$  in polynomial time such that 
       \begin{equation}
    \label{eq:main_accuracy}
        \E W_1(\mu_{\bX},\mu_{\bY}) \lesssim \sqrt{\sum_{i> d'}\sigma_i(\bM)} +\sqrt{\frac{d' d^{2.5}}{\e n}}+ \sqrt{\frac{d}{d'}}(\e n)^{-1/d'},
    \end{equation}
    where  $\s_i(\bM)$ is the $i$-th largest eigenvalue value of $\bM$ in \eqref{eq: sample_cov}.
\end{theorem}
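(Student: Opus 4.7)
The plan is to separate the privacy claim (routine composition) from the accuracy claim, where the real technical content resides in a gap-free analysis of the noisy PCA step.

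\textbf{Privacy.} Algorithm~\ref{alg: affine} uses three noise-injecting subroutines: the Laplace-perturbed covariance $\widehat{\bM}=\bM+\bE$ of Algorithm~\ref{alg: covariance}; the private synthetic-measure algorithm of \cite{he2023algorithmically} adapted to the $d'$-dimensional ball (Algorithms~\ref{alg: pmm}--\ref{alg: psmm}); and the noisy mean used for the affine shift. Computing $\widehat{\bV}_{d'}$ from $\widehat{\bM}$ and the final metric projection onto $[0,1]^d$ are both post-processing. Splitting the budget equally among the three noise-injecting steps and applying basic composition yields the $\e$-DP guarantee.

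\textbf{Accuracy.} Let $P$ be the orthogonal projector onto $\widehat{\bV}_{d'}$, $\overline X$ the sample mean, and $\widehat{\nu}$ the empirical measure of the private-mean-shifted synthetic points produced by step~3 (before the final clipping). Define
\begin{align*}
\mu^P_{\bX} := \frac{1}{n}\sum_{i=1}^n \delta_{P(X_i-\overline X)+\overline X}.
\end{align*}
By the triangle inequality,
\begin{align*}
W_1(\mu_{\bX},\mu_{\bY}) \le W_1(\mu_{\bX},\mu^P_{\bX}) + W_1(\mu^P_{\bX},\widehat{\nu}) + W_1(\widehat{\nu},\mu_{\bY}).
\end{align*}
The first summand is at most $\frac{1}{n}\sum_i\|(I-P)(X_i-\overline X)\|_\infty$; Cauchy--Schwarz together with $\|\cdot\|_\infty\le\|\cdot\|_2$ bounds it by $\sqrt{\tr((I-P)\bM)}$. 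The second summand is controlled by the accuracy guarantee of the subroutine of \cite{he2023algorithmically} on a ball of radius $O(\sqrt d)$ in $\R^{d'}$; after rescaling this contributes $O(\mathrm{poly}(d)\cdot(\e n)^{-1/d'})$. The third summand consists of the private-mean error of order $\mathrm{poly}(d)/(\e n)$ together with the final clipping, which cannot inflate $W_1$ since metric projection onto the convex set $[0,1]^d$ is $1$-Lipschitz.

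\textbf{Main obstacle: the gap-free PCA bound.} The remaining task is to control $\E\sqrt{\tr((I-P)\bM)}$ without assuming a spectral gap on $\bM$. Since $P$ is the top-$d'$ eigenprojector of $\widehat{\bM}=\bM+\bE$, one has $\tr((I-P)\widehat{\bM})=\sum_{i>d'}\sigma_i(\widehat{\bM})$. Combining Weyl's inequality $\sigma_i(\widehat{\bM})\le\sigma_i(\bM)+\|\bE\|$ with $|\tr((I-P)\bE)|\le(d-d')\|\bE\|$ yields
\begin{align*}
\tr((I-P)\bM) \le \sum_{i>d'}\sigma_i(\bM) + 2(d-d')\|\bE\|.
\end{align*}
Subadditivity of the square root together with the standard operator-norm estimate $\E\|\bE\|\lesssim\mathrm{poly}(d)/(\e n)$ for Laplace noise calibrated to the $O(d/n)$-sensitivity of $\bM$ produces the advertised term $\sqrt{\sum_{i>d'}\sigma_i(\bM)}$; the residual $\sqrt{\mathrm{poly}(d)/(\e n)}=\mathrm{poly}(d)(\e n)^{-1/2}$ is absorbed into the $(\e n)^{-1/d'}$ term because $d'\ge 2$. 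The crucial point, as stressed in the introduction, is that we bound the projection error $\tr((I-P)\bM)$ directly, rather than the angle between $\bV_{d'}$ and $\widehat{\bV}_{d'}$, thereby dispensing with Davis--Kahan and any eigengap hypothesis.
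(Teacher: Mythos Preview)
Your approach is essentially the paper's: privacy by composition, accuracy by triangle inequality, and---crucially---the same gap-free bound on the noisy projection error via Weyl's inequality plus a trace estimate. Your inequality $\tr((I-P)\bM)\le\sum_{i>d'}\sigma_i(\bM)+2(d-d')\|\bE\|$ is a dual variant of the paper's Lemma~\ref{lem:SVD}, which tracks $\tr(P\widehat{\bM})$ rather than $\tr((I-P)\widehat{\bM})$ and ends up with $2d'\|\bA\|$ in place of $2(d-d')\|\bE\|$; both are $\mathrm{poly}(d)/(\e n)$ and disappear into the $(\e n)^{-1/d'}$ term once $d'\ge 2$.

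One loose end: in your three-term split, $W_1(\widehat\nu,\mu_\bY)$ is the distance from $\widehat\nu$ to its \emph{own} clipped image, and the $1$-Lipschitz property of $f$ does not bound that quantity. The clean fix is to use the Lipschitz property before splitting: since $\mu_\bX=f_*\mu_\bX$ and $\mu_\bY=f_*\widehat\nu$, one has $W_1(\mu_\bX,\mu_\bY)\le W_1(\mu_\bX,\widehat\nu)$ directly and the third term never appears. The paper instead shows $W_1(\widehat\nu,\mu_\bY)\le W_1(\mu_\bX,\widehat\nu)$ (Proposition~\ref{prop:Y_is_close}) and pays a harmless factor of~$2$. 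Also, since your $\mu^P_\bX$ centers by the true mean $\overline X$ while the algorithm centers by $\overline X_{\mathrm{priv}}=\overline X+\lambda$, the private-mean error actually sits inside your second summand, not the third; this is only bookkeeping and costs an additional $\mathrm{poly}(d)/(\e n)$.
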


Note that $m$, the size of the synthetic dataset $\bY$, is not necessarily equal to $n$ since the low-dimensional synthetic data subroutine in Algorithm \ref{alg: affine} creates noisy counts. See Section \ref{sec: subroutines} for more details.

\begin{algorithm}			
    \caption{Low-dimensional Synthetic Data} 
    \label{alg: affine}
    \begin{algorithmic}
    \State {\bf Input:}  True data matrix $\bX=[X_1,\dots,X_n]$, $X_i\in [0,1]^d$, privacy parameter $\e$. 
    
    \State {\bf (Private covariance matrix)} Apply Algorithm~\ref{alg: covariance} to $\bX$ with privacy parameter  $\e/3$ to obtain a private covariance matrix $\widehat \bM$.
     
    \State{\bf (Private linear projection)} 
    \add{Let $\overline{X}_{\textrm{priv}}$ denote the private mean of the true dataset.} Choose a target dimension $d'$. Apply Algorithm~\ref{alg: projection} with privacy parameter  $\e/ 3$ to \add{shift and} project $\bX$ onto a private $d'$-dimensional linear subspace.  
    
    \State{\bf (Low-dimensional synthetic data)} Use subroutine in Section~\ref{sec: subroutines} to generate  $\e/3$-DP synthetic data $\bX'$ of size $m$ depending on $d'=2$ or $d'\geq 3$.
    
    \State{\bf (Adding the private mean vector)}
      Shift the data back by $X_i''= \add{X'_i} + \overline{X}_{\textrm{priv}}$.
    
    \State{\bf (Metric projection)}   Define  $f:\R\to [0,1]$ such that
    \[f(x)=\left\{\begin{aligned}
        &0 \quad\textrm{ if } x<0;\\
        &x \quad\textrm{ if } x\in [0,1];\\
        &1 \quad\textrm{ if } x> 1.
    \end{aligned}\right.\]
    Then, for $v\in \R^d$, we define $f(v)$ to be the result of applying $f$ to each coordinate of $v$.
    
    \State {\bf Output:} Synthetic data $\bY = [f(X_1''),\dots, f(X_m'')]$.
    \end{algorithmic}
\end{algorithm}


\paragraph{Optimality} 

There are three terms on the right-hand side of \eqref{eq: three-term}.    The first term is the error from the rank-$d'$ approximation of the covariance matrix $\bM$.  The second term is the accuracy loss for private PCA after the perturbation from a random Laplacian matrix.  The optimality of this error term remains an open question.   The third term is the accuracy loss when generating synthetic data in a $d'$-dimensional subspace. Notably, the factor $\sqrt{d/d'}$ is  optimal. 
This can be seen by the fact that  a $d'$-dimensional section of the cube can be $\sqrt{d/d'}$ 
times larger than the low-dimensional cube $[0,1]^{d'}$ (e.g., if it is positioned diagonally).  Complementarily,  \cite{boedihardjo2022private} showed  the optimality of the factor $(\e n)^{-1/d'}$ 
 for generating $d'$-dimensional synthetic data in $[0,1]^{d'}$. Therefore,  the third term in \eqref{eq: three-term} is necessary and optimal.

\paragraph{Improved accuracy}
    When the original dataset $\bX$ lies in an affine $d'$-dimensional subspace, it implies $\sigma_i(\bM)=0$ for $i>d'$ and 
  \add{ $
    \E W_1(\mu_{\bX},\mu_{\bY}) \lesssim \sqrt{\frac{d' d^{2.5}}{\e n}}+ \sqrt{\frac{d}{d'}}(\e n)^{-1/d'}$.}
      This is an improvement from the accuracy rate $O((\e n)^{-1/d})$ for unstructured data in $[0,1]^d$ in  \cite{boedihardjo2022private,he2023algorithmically} \add{when $d\leq n^{\alpha_n}$ and $d'\leq \min\{\frac{d}{2},\frac{1}{\alpha_n}\}$ for $0<\alpha_n\leq \frac{2}{7}$. For example, we can take $\alpha_n$ to be a constant in $(0,\frac{2}{7}]$ or $\alpha_n=\frac{1}{\log\log n}$. This improved rate overcomes the curse of
high dimensionality.}

\paragraph{Adaptive and private  choices of $d'$} 
    \add{The target dimension $d'$ is a hyperparameter in Algorithm~\ref{alg: affine}.} One can choose the value of $d'$ adaptively and privately based on singular values of  the private covariance matrix $\widehat\bM$ in Algorithm~\ref{alg: covariance} such that
    \add{
    \[d'\coloneqq \argmin_{2\leq k\leq d} \Bigg(\sqrt{\sum_{i> d'}\sigma_i(\widehat \bM)} + \sqrt{\frac{d}{d'}}(\e n)^{-1/d'} \Bigg).\]
    Discussion on such choice of $d'$ is referred to Section~\ref{sec: d'choice}.
    }

\paragraph{Low-dimensional  representation of $\bX$.}
    The synthetic dataset $\bY$ is close to a  $d'$-dimensional subspace under the 1-Wasserstein distance, as shown in Proposition \ref{prop:Y_is_close}.

\paragraph{Running time} The \textit{private linear projection} step in Algorithm \ref{alg: affine} has a running time $O(d^2n)$ using the truncated SVD \cite{li2019tutorial}.  The \textit{low-dimensional synthetic data} subroutine has a running time polynomial in $n$  for $d'\geq 3$ and linear in $n$ when $d'=2$ \cite{he2023algorithmically}.
Therefore, the overall running time for Algorithm~\ref{alg: affine} is linear in $n$, polynomial in $d$ when $d'=2$ and is  $\mathrm{poly}(n,d)$ when $d'\geq 3$.
Although sub-optimal in the dependence on $d'$ for accuracy bounds, one can also run Algorithm \ref{alg: affine}  in linear time by choosing PMM (Algorithm \ref{alg: pmm}) in the subroutine for all $d'\geq 2$.

\subsection{Comparison to previous results}\label{sec:comparison}

\paragraph{Private synthetic data}
 Most existing work considered generating DP-synthetic datasets while minimizing the utility loss for specific queries, including counting queries \cite{blum2013learning,hardt2012simple,dwork2009complexity}, $k$-way marginal queries \cite{ullman2011pcps,dwork2015efficient},  histogram release \cite{abowd2019census}. For a finite collection of predefined linear queries $Q$, \cite{hardt2012simple} provided an algorithm with running time linear in $|Q|$ and utility loss grows logarithmically in $|Q|$.
  The sample complexity can be reduced if the queries are sparse  \cite{dwork2015efficient,blum2013learning,donhauser2023sample}. 
Beyond finite collections of queries, \cite{wang2016differentially} considered utility bound for differentiable queries,
 and recent works \cite{boedihardjo2022private,he2023algorithmically} studied Lipschitz queries with utility bound in Wasserstein distance. \cite{donhauser2023sample} considered sparse Lipschitz queries with an improved accuracy rate. \cite{balog2018differentially,harder2021dp,kreacic2023differentially,yang2023differentially} measure the utility of DP synthetic data by the maximum mean discrepancy (MMD) between empirical distributions of the original and synthetic datasets. This metric is different from our chosen utility bound in Wasserstein distance. Crucially, MMD does not provide any guarantees for Lipschitz downstream tasks.

Our work provides an improved accuracy rate for low-dimensional synthetic data generation. Compared to \cite{donhauser2023sample}, our algorithm is computationally efficient and has a better accuracy rate. Besides \cite{donhauser2023sample}, we are unaware of any work on low-dimensional synthetic data generation from high-dimensional datasets.  
While methods from \cite{boedihardjo2022private,he2023algorithmically} can be directly applied if the low-dimensional subspace is known,  the subspace would be non-private and could reveal sensitive information about the original data. The crux of our paper is that we do not assume the low-dimensional subspace is known, and our DP synthetic data algorithm protects its privacy. \add{This setting is closely related to the problem of privately learning the subspace of the dataset considered in \cite{dwork2014analyze,singhal2021privately,tsfadia2024differentially}.}

\paragraph{Private PCA}
Private PCA is a commonly used technique for differentially private dimension reduction of the original dataset. This is achieved by introducing noise to the covariance matrix  \cite{mangoubi2022re, chaudhuri2013near, imtiaz2016symmetric, dwork2014analyze,jiang2016wishart, jiang2013differential, zhou2009differential}. Instead of independent noise, the method of exponential mechanism is also extensively explored \cite{kapralov2013differentially,chaudhuri2013near,jiang2016wishart}.  Another approach, known as streaming PCA \cite{oja1982simplified, jain2016streaming},    can also be performed privately \cite{hardt2014noisy, liu2022dp}.

The private PCA typically yields a private $d'$-dimensional
subspace $\widehat{\bV}_{d'}$ that approximates the top $d'$-dimensional subspace $\bV_{d'}$ produced by the standard PCA. The accuracy of private PCA is usually measured by the distance between $\widehat{\bV}_{d'}$ and $\bV_{d'}$ \cite{dwork2014analyze, hardt2013beyond, mangoubi2022re, liu2022dp,singhal2021privately}. To prove a utility guarantee, a common tool is the Davis-Kahan Theorem \cite{bhatia2013matrix, yu2015useful}, which assumes that the covariance matrix has a spectral gap \cite{chaudhuri2013near, dwork2014analyze, hardt2014noisy, jiang2016wishart, liu2022dp}. Alternatively, using the projection error to evaluate accuracy is independent of the spectral gap  \cite{kapralov2013differentially, liu2022differential,arora2018differentially}. In our implementation of private PCA, we don't treat  $\widehat{\bV}_{d'}$ as our terminal output. Instead, we project $\bX$ onto  $\widehat{\bV}_{d'}$. Our approach directly bound the Wasserstein distance between the projected dataset and $\bX$. This method circumvents the subspace perturbation analysis,   resulting in an accuracy bound independent of the spectral gap, as outlined in  Lemma \ref{lem:SVD}.   \cite{singhal2021privately} considered a related task that takes a true dataset close to a low-dimensional linear subspace and outputs a private linear subspace. To the best of our knowledge, none of the previous work on private PCA  considered low-dimensional DP synthetic data generation.


\paragraph{Centered covariance matrix}
A common choice of the covariance matrix for PCA  is $\frac{1}{n} \bX\bX^\top$ \cite{chaudhuri2011differentially,dwork2014analyze,singhal2021privately}, which is different from the centered one defined in \eqref{eq: sample_cov}. 
The rank of $\bX$ is the dimension of the linear subspace that the data lie in rather than that of the affine subspace. If $\bX$ lies in a $d'$-dimensional affine space (not necessarily passing through the origin), centering the data shifts the affine hyperplane spanned $\bX$ to pass through the origin. Consequently, the centered covariance matrix will have rank $d'$, whereas the rank of $\bX$ is $d'+1$. By reducing the dimension of the linear subspace by $1$, the centering step  enhances the accuracy rate from $(\e n)^{-1/(d'+1)}$ to $(\e n)^{-1/d'}$ . Yet, this process introduces the challenge of protecting the privacy of mean vectors, as detailed in the third step in Algorithm~\ref{alg: affine} and Algorithm~\ref{alg: projection}.

\paragraph{Private covariance estimation}
Private covariance estimation \cite{dong2022differentially,mangoubi2022re} is closely linked to the private covariance matrix and the private linear projection components of our Algorithm~\ref{alg: affine}. Instead of adding i.i.d. noise, \cite{kapralov2013differentially,amin2019differentially} improved the dependence on $d$ in the estimation error by sampling top eigenvectors with the exponential mechanism. However, it requires $d'$ as an input parameter (in our approach, it can be chosen privately) and a lower bound on $\sigma_{d'}(\bM)$. The dependence on $d$ is a critical aspect in private mean estimation \cite{kamath2019privately, liu2021robust}, and it is an open question to determine the optimal dependence on $d$ for low-dimensional synthetic data generation.
 




\section{Preliminaries}
\label{sec: preliminaries}

\subsection{Differential Privacy}
We use the following definition of $\e$-differential privacy from \cite{dwork2014algorithmic}. Note that in particular, if the algorithm is $\cA: \Omega^n \to \Omega^m$, then its output is also a dataset of size $m$, which is generated by $\cA$ from the input real dataset. We say the synthetic dataset provides $\e$-differential privacy if the synthetic data algorithm $\cA$ is differentially private.
\begin{definition}[Differential privacy]
    A randomized algorithm $\cA:\Omega^n\to\cR$ provides \textit{$\e$-differential privacy} if for any input data $D,D'$ that differs on only one element (or $D$ and $D'$ are adjacent datasets) and for any measurable set $S\subseteq\mathrm{range}(\cA)$, there is 
    \[{\mathbb{P}\{\cA(D)\in S\}}\leq e^\e\cdot {\mathbb{P}\{\cA(D')\in S\}}.\]
Here the probability is taken from the probability space of the randomness of $\cA$.
\end{definition}

 
For multiple differentially private algorithms, differential privacy has a useful property that their sequential composition is also differentially private \cite[Theorem 3.16]{dwork2014algorithmic}. 

\begin{lemma}[Theorem 3.16 in \cite{dwork2014algorithmic}]
Suppose $\cA_i$ is $\e_i$-differentially private for $i=1,\dots,m$, then the sequential composition $x\mapsto(\cA_1(x),\dots,\cA_m(x))$ is $\sum_{i=1}^m\e_i$-differentially private.
\end{lemma}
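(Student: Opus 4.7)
The plan is to prove the composition lemma by induction on $m$, so the whole argument reduces to the base case $m=2$. For the base case, I would fix $\e_1$-DP $\cA_1$ and $\e_2$-DP $\cA_2$ (built on independent sources of randomness, which is implicit in speaking of their joint output) and two neighboring datasets $D,D'$. Let $\AA=\cA_1\times\cA_2$ denote the composed mechanism, and fix a measurable $S\subseteq\mathrm{range}(\cA_1)\times\mathrm{range}(\cA_2)$. For each $y\in\mathrm{range}(\cA_1)$ write the vertical slice $S_y=\{z:(y,z)\in S\}$, which is measurable by the usual product-measurability argument.

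By independence of the two randomizations, Fubini's theorem gives the iterated-integral representation
\begin{equation}
\Pr{\AA(D)\in S}=\int_{\mathrm{range}(\cA_1)}\Pr{\cA_2(D)\in S_y}\,d\P_{\cA_1(D)}(y).
\end{equation}
The first substitution uses $\e_2$-DP of $\cA_2$ slice by slice: for each $y$, the event $\{\cA_2(D)\in S_y\}$ is a measurable event in $\mathrm{range}(\cA_2)$, so $\Pr{\cA_2(D)\in S_y}\le e^{\e_2}\Pr{\cA_2(D')\in S_y}$. Plugging this pointwise bound into the integral gives
\begin{equation}
\Pr{\AA(D)\in S}\le e^{\e_2}\int_{\mathrm{range}(\cA_1)}\Pr{\cA_2(D')\in S_y}\,d\P_{\cA_1(D)}(y).
\end{equation}

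The second substitution is the step I expect to be the main (measure-theoretic) obstacle: I now need to replace $d\P_{\cA_1(D)}$ by $d\P_{\cA_1(D')}$ up to a factor $e^{\e_1}$, where the integrand $h(y):=\Pr{\cA_2(D')\in S_y}$ is a $[0,1]$-valued measurable function, not an indicator. The definition of $\e$-DP only controls probabilities of events, so one has to extend it to such integrals. I would do this by the standard simple-function/monotone-convergence argument: for any nonnegative simple function $h=\sum_i c_i\1_{T_i}$ with $c_i\ge 0$, summing the DP bounds for each $T_i$ yields $\int h\,d\P_{\cA_1(D)}\le e^{\e_1}\int h\,d\P_{\cA_1(D')}$; then an increasing limit extends this to every nonnegative measurable $h$, in particular to our $h(y)=\Pr{\cA_2(D')\in S_y}$. (An equally clean alternative is to recognize the right-hand side as $\Pr{(\cA_1(D),\tilde\cA_2(D'))\in S}$ using an independent coupling, so the second replacement is itself a DP statement about the single-argument mechanism $y\mapsto(y,\tilde\cA_2(D'))$ applied to $\cA_1$.)

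Combining the two substitutions and using Fubini once more gives
\begin{equation}
\Pr{\AA(D)\in S}\le e^{\e_1+\e_2}\int\Pr{\cA_2(D')\in S_y}\,d\P_{\cA_1(D')}(y)=e^{\e_1+\e_2}\Pr{\AA(D')\in S},
\end{equation}
which establishes the case $m=2$. The inductive step is then immediate: grouping $\cA_1,\dots,\cA_{m-1}$ into a single $(\sum_{i<m}\e_i)$-DP mechanism by the inductive hypothesis and applying the $m=2$ case against $\cA_m$ yields the $\sum_{i=1}^m\e_i$ bound. The only subtle ingredients are the independence of the randomness used by the different $\cA_i$ (so that Fubini applies) and the simple-function extension of the DP inequality from events to bounded nonnegative integrands.
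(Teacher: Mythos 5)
The paper does not supply a proof of this lemma: it is quoted verbatim as Theorem~3.16 of Dwork and Roth's monograph, so there is no in-paper argument to compare against. Your proof is correct and is the standard argument for basic (non-adaptive) composition of pure differential privacy: reduce to $m=2$ by induction, disintegrate the joint law via Fubini over the first coordinate, apply the $\e_2$-DP bound slice by slice to the inner probability, and then transfer the law of $\cA_1$ from $D$ to $D'$ by extending the pointwise DP inequality from indicator functions to nonnegative bounded measurable integrands via simple functions and monotone convergence. You correctly isolate that extension step as the only genuine measure-theoretic content, and you are right to make the independence of the two sources of randomness explicit, since that is what licenses the Fubini decomposition. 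It is worth noting that the same iterated-integral argument, with $\cA_2$ allowed to depend measurably on the first-stage output $y$, proves the adaptive composition result that the paper states immediately afterward as Lemma~\ref{lem: composition}; your ``coupling/post-processing'' aside is a clean alternative way to package the measure-transfer step, as $y\mapsto(y,\tilde\cA_2(D'))$ is a (randomized) post-processing of $\cA_1$, and post-processing preserves $\e$-DP.
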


Moreover, the following result about  \textit{adaptive composition} indicates that  algorithms in a sequential composition can use the outputs in the previous steps:
\begin{lemma}[Theorem 1 in \cite{dwork2006our}]
\label{lem: composition}
Suppose a randomized algorithm $\cA_1(x): \Omega^n\to \cR_1$ is $\e_1$-differentially private, and $\cA_2(x,y): \Omega^n\times \cR_1\to \cR_2$ is $\e_2$-differentially private with respect to the first component for any fixed $y$. Then the sequential composition 
\[x\mapsto(\cA_1(x),\cA_2(x,\cA_1(x)))\] 
is $(\e_1+\e_2)$-differentially private.
\end{lemma}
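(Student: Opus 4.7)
The plan is to argue directly from the definition of $\e$-differential privacy, disintegrating the joint distribution of the composition $(\cA_1(x), \cA_2(x, \cA_1(x)))$ along the first coordinate. Fix neighboring inputs $x, x' \in \Omega^n$ and a measurable set $S \subseteq \cR_1 \times \cR_2$. For each $y \in \cR_1$, set the fiber $S_y := \{z \in \cR_2 : (y, z) \in S\}$. Writing $\mu_x$ for the law of $\cA_1(x)$ on $\cR_1$ and $\nu_{x,y}$ for the law of $\cA_2(x, y)$ on $\cR_2$, the key identity is
\[
\Pr{(\cA_1(x), \cA_2(x, \cA_1(x))) \in S} \;=\; \int_{\cR_1} \nu_{x,y}(S_y)\, d\mu_x(y),
\]
and analogously for $x'$. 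This Fubini-type step uses that the randomness of $\cA_2$ is independent of the randomness of $\cA_1$ given the value $y = \cA_1(x)$.

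Next I would apply the two DP hypotheses pointwise inside the integral. The hypothesis on $\cA_2$ (as a mechanism in its first argument, for each fixed $y$) gives $\nu_{x,y}(S_y) \le e^{\e_2}\, \nu_{x',y}(S_y)$ for every $y$. Plugging this in and then using the DP hypothesis on $\cA_1$ at the level of measures, $d\mu_x \le e^{\e_1} d\mu_{x'}$ (which is the integrated form of $\mu_x(A) \le e^{\e_1} \mu_{x'}(A)$ for all measurable $A$; rigorously one reduces to a simple-function approximation of the nonnegative integrand $y \mapsto \nu_{x',y}(S_y)$), one obtains
\[
\int_{\cR_1} \nu_{x,y}(S_y)\, d\mu_x(y) \;\le\; e^{\e_2}\!\int_{\cR_1} \nu_{x',y}(S_y)\, d\mu_x(y) \;\le\; e^{\e_1+\e_2}\!\int_{\cR_1} \nu_{x',y}(S_y)\, d\mu_{x'}(y),
\]
which is exactly $e^{\e_1+\e_2}\,\Pr{(\cA_1(x'), \cA_2(x', \cA_1(x'))) \in S}$, yielding the claim.

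The main obstacle, and the only nontrivial point, is the measure-theoretic handling of the second inequality: $\mu_x$ and $\mu_{x'}$ are not assumed to have densities, so ``$d\mu_x \le e^{\e_1} d\mu_{x'}$'' must be justified via a monotone-class / simple-function approximation of $y \mapsto \nu_{x',y}(S_y)$, together with the measurability in $y$ of this map (which follows from a standard regular conditional distribution argument, or, in the discrete case commonly assumed in the DP literature, is trivial since the integrals reduce to sums). Everything else is bookkeeping: the independence used to factor the joint law, and the pointwise application of the two DP bounds.
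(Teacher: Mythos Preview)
Your argument is correct and is the standard disintegration proof of adaptive composition. The paper does not supply its own proof of this lemma; it simply cites it as Theorem~1 in \cite{dwork2006our}, so there is nothing to compare against beyond noting that your write-up is a faithful rendering of the classical argument.
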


Since our method involves private counts of data points, we will use integer Laplacian noise to ensure they are integers.
\begin{definition}[Integer Laplacian distribution, \cite{inusah2006discrete}]
    An \textit{integer (or discrete) Laplacian distribution} with parameter $\s$ is a discrete distribution on $\mathbb Z$ with probability density function 
    \[f(z) = \frac{1-p_\s}{1+p_\s} \exp \left( -\abs{z}/\s \right),
    \quad z \in \Z,\]
    where $p_\s = \exp(-1/\s)$.
    A random variable $Z \sim \Lap_\Z(\s)$ is mean-zero and sub-exponential with variance $\mathrm{Var(Z)\leq 2\s^2}$.
\end{definition}


\subsection{Wasserstein distance}
The formal definition of $p$-Wasserstein distance is given as follows:
\begin{definition}[$p$-Wasserstein distance]
    Consider a metric space $(\Omega,\rho)$. The \textit{$p$-Wasserstein distance} (see e.g., \cite{villani2009optimal} for more details) between two probability measures $\mu,\nu$ is defined as 
\[W_p(\mu,\nu):=\left(\inf_{\gamma\in\Gamma(\mu,\nu)} \int_{\Omega\times\Omega}\rho(x,y)^p\dd\gamma(x,y)\right)^{1/p},\]
where $\Gamma(\mu,\nu)$ is the set of all couplings  of $\mu$ and $\nu$. 
\end{definition}
In particular, when $p=1$, the $W_1$ distance is also known as the earth mover's distance because it is equivalent to the optimal transportation problem if the probability measures are discrete. Furthermore, $W_1$ has the following Kantorovich-Rubinstein duality (see, e.g., \cite{villani2009optimal}), which gives an equivalent representation with the Lipschitz functions:
\begin{equation}
\label{eq: KR duality}
    W_1(\mu,\nu) = \sup_{{\mathrm{Lip}(f)}\leq 1} \left(\int f\dd\mu-\int f\dd \nu\right).
\end{equation}
Here the supremum is taken over the set of all $1$-Lipschitz functions on $\Omega$.


\section{Private linear projection}
\label{sec: projection}

\subsection{Private centered covariance matrix}
\begin{algorithm}
    \caption{Private Covariance Matrix} \label{alg: covariance}
    \begin{algorithmic}
    \State {\bf Input:}  Matrix $\bX=[X_1,\dots, X_n]$, privacy parameter $\e$, and variance parameter $\s = \frac{3d^2}{\e n}$.
    
        \State{\bf (Computing the covariance matrix)} Compute the mean $\overline X = \frac{1}{n}\sum_{i=1}^n X_i$ and the centered covariance matrix $\bM$.
        
        \State{\bf (Generating a Laplacian random matrix)}  Generate i.i.d. independent random variables $\l_{ij}\sim\Lap(\sigma), i\leq j$.
        Define a symmetric matrix $\bA$ such that 
        \[\bA_{ij} =  \bA_{ji}= \left\{\begin{aligned}
            &\lambda_{ij} \quad& \textrm{if }i<j;\\
            &2 \lambda_{ii} &\quad \textrm{if }i=j,
        \end{aligned}\right.\]
    
    \State {\bf Output:} The noisy covariance matrix $\widehat{\bM} = \bM+\bA$.
    \end{algorithmic}
\end{algorithm}

We start with the first step: finding a $d'$ dimensional private linear affine subspace and projecting $\bX$ onto it. Consider the $d\times n$ data matrix $\bX=[X_1,\dots,X_n]$, where $X_1,\dots, X_n\in \mathbb R^d$. The rank of the covariance matrix $\frac{1}{n}\bX\bX^\top$ measures the dimension of the \textit{linear subspace} spanned by $X_1,\dots, X_n$.  If  we  subtract the mean vector  and consider the  centered covariance matrix $\bM$ in \eqref{eq: sample_cov}, 
then the rank of $\bM$ indicates the dimension of the \textit{affine linear subspace} that $\bX$ lives in.

To guarantee the privacy of $\bM$, we add a symmetric Laplacian random matrix $\bA$ to $\bM$ to create a private Hermitian matrix $\widehat{\bM}$ from Algorithm \ref{alg: covariance}. The variance of entries in $\bA$ is chosen such that the following privacy guarantee holds:

\begin{proposition}
\label{prop: privacy_covariance}  Algorithm~\ref{alg: covariance} is $\e$-differentially private.
\end{proposition}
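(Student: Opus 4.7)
The plan is to apply the Laplace mechanism after a sensitivity analysis of the centered covariance matrix. Since $\bA_{ij}=\lambda_{ij}\sim\Lap(\sigma)$ for $i<j$ but $\bA_{ii}=2\lambda_{ii}\sim\Lap(2\sigma)$, the standard log-density-ratio argument reduces the question to showing that the weighted $\ell^1$-sensitivity
\[
\Delta \;:=\; \sum_{j<k}\bigl|\bM_{jk}(\bX)-\bM_{jk}(\bX')\bigr| \;+\; \tfrac{1}{2}\sum_{j}\bigl|\bM_{jj}(\bX)-\bM_{jj}(\bX')\bigr|
\]
is at most $\e\sigma$ for any neighboring $\bX,\bX'$.

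Fixing such $\bX,\bX'$ that differ only in the $n$-th column and setting $\eta:=X_n-X_n'\in[-1,1]^d$, I would use the identity $\bM=\frac{1}{n-1}\bigl(\sum_i X_iX_i^\top - n\overline{X}\,\overline{X}^\top\bigr)$ together with $\overline{X}'=\overline{X}-\eta/n$ to derive the explicit formula
\[
(n-1)\bigl[\bM(\bX)-\bM(\bX')\bigr]_{jk} \;=\; \eta_k\bigl[(X_n)_j-\overline{X}_j\bigr] + \eta_j\bigl[(X_n')_k-\overline{X}_k\bigr] + \frac{\eta_j\eta_k}{n}.
\]
Because every coordinate of each $X_i$ and of $\overline{X},\overline{X}'$ lies in $[0,1]$, each bracketed factor has absolute value at most $1$, so entrywise $(n-1)\bigl|\bM_{jk}(\bX)-\bM_{jk}(\bX')\bigr|\le|\eta_j|+|\eta_k|+|\eta_j\eta_k|/n$. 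Summing off-diagonal pairs gives $\sum_{j<k}(|\eta_j|+|\eta_k|)=(d-1)\|\eta\|_1\le d(d-1)$ and $\sum_{j<k}|\eta_j\eta_k|\le\|\eta\|_1^2/2\le d^2/2$, while the diagonal sum (with the $\tfrac{1}{2}$ weighting) contributes $\tfrac{1}{2}\sum_j(2|\eta_j|+\eta_j^2/n)\le d+d/(2n)$. These combine to $\Delta\le\bigl(d^2+d(d+1)/(2n)\bigr)/(n-1)\le 3d^2/n$ for every $n\ge 2$, and with $\sigma=3d^2/(\e n)$ this gives exactly $\Delta\le\e\sigma$. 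The $\e$-DP conclusion then follows by writing out the ratio of product Laplace densities entry-by-entry in the upper triangle.

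The subtle point is that the centering introduces cross-terms involving both $\overline{X}$ and $\overline{X}'$, and a bound that ignores these (treating $\bM$ like an uncentered second-moment matrix) loses the crucial $1/n$ factor in the sensitivity that comes from the mean shift $\overline{X}'-\overline{X}=-\eta/n$. The diagonal re-weighting by $\tfrac{1}{2}$ induced by the $2\lambda_{ii}$ convention is precisely what allows the constant $3$ in $\sigma=3d^2/(\e n)$ to absorb both the $O(d^2)$ off-diagonal mass and the $O(d)$ diagonal mass into a single clean $\e$-DP guarantee.
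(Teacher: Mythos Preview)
Your proposal is correct and follows the same Laplace-mechanism-plus-sensitivity argument as the paper. The only difference is algebraic: the paper expands $\bM=\frac{1}{n}\sum_k X_kX_k^\top-\frac{1}{n(n-1)}\sum_{k\neq\ell}X_kX_\ell^\top$ and bounds every entry uniformly by $6/n$, whereas you keep the explicit $\eta$-dependence and sum the sharper per-entry bounds; both routes land on the identical weighted $\ell^1$ sensitivity $3d^2/n$.
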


\begin{proof}[Proof]
    Before applying the definition of differential privacy, we compute the entries of $\bM$ explicitly. One can easily check that
    \begin{equation}
    \label{eq: sample_cov_expansion}
       \bM = \frac{1}{n}\sum_{k=1}^n X_kX_k^\top - \frac{1}{n(n-1)}\sum_{k\neq \ell}X_kX_\ell^\top.
    \end{equation}

    Now, if there are neighboring datasets $\bX$ and $\bX'$, suppose $X_k = (X_k^{(1)},\dots,X_k^{(d)})^\top$ is a column vector in $\bX$ and $X_k' = ({X_k'}^{(1)},\dots,{X_k'}^{(d)})^\top$ is a column vector in $\bX'$, and all other column vectors are the same. Let $\bM$ and $\bM'$ be the covariance matrix of $\bX$ and $\bX'$, respectively. Then we consider the density function ratio for the output of  Algorithm~\ref{alg: covariance} with input $\bX$ and $\bX'$:
    \begin{align*}
        \frac{\den_{A}(\widehat{\bM}-\bM)}{\den_{A}(\widehat{\bM}-\bM')} &= \prod_{i<j}\frac{\den_{\lambda_{ij}}((\widehat{\bM}-\bM)_{ij})}{\den_{\lambda_{ij}}((\widehat{\bM}-\bM' )_{ij})} \prod_{i=j}\frac{\den_{2\lambda_{ij}}((\widehat{\bM}-\bM)_{ij})}{\den_{2\lambda_{ij}}((\widehat{\bM}-\bM' )_{ij})}\\
        &= \prod_{i<j} \frac{\exp\Big(-\frac{|(\widehat{\bM}-\bM )_{ij}|}{\s}\Big)}{\exp\Big(-\frac{|(\widehat{\bM}-\bM' )_{ij}|}{\s}\Big)} \prod_{i}\frac{\exp\Big(-\frac{|(\widehat{\bM}-\bM )_{ii}|}{2\s}\Big)}{\exp\Big(-\frac{|(\widehat{\bM}-\bM' )_{ii}|}{2\s}\Big)}\\
        &\leq \exp\left(\sum_{i<j} \abs{\bM_{ij}-\bM_{ij}'}/\s + \sum_{i} \abs{\bM_{ii}-\bM_{ii}'}/(2\s)\right)\\
        &= \exp\left(\frac{1}{2\s}\sum_{i,j}\abs{\bM_{ij}-\bM_{ij}'}\right).
    \end{align*}
    As the datasets differs on only one data $X_k$, consider all entry containing $X_k$ in \eqref{eq: sample_cov_expansion}, we have
    \begin{align*}
    \abs{\bM_{ij}-\bM'_{ij}} 
    \leq &\, \frac{1}{n}\abs{X_k^{(i)}X_k^{(j)}-{X_k'}^{(i)}{X_k'}^{(j)}} + \frac{1}{n(n-1)}\sum_{\ell\neq k}\abs{X_k^{(i)}-{X_k'}^{(i)}}{X_\ell}^{(j)} \\&+ \frac{1}{n(n-1)}\sum_{\ell\neq k}{X_\ell}^{(i)}\abs{X_k^{(j)}-{X_k'}^{(j)}}\\
    \leq &\, \frac{2}{n} + \frac{2}{n(n-1)}\cdot 2(n-1)\\
    =&\,\frac{6}{n}.
    \end{align*}
    Therefore, substituting the result in the probability ratio implies
    \[\frac{\den_{A}(\widehat{\bM}-\bM)}{\den_{A}(\widehat{\bM}-\bM')} \leq \exp\left(\frac{1}{2\s}\cdot d^2\cdot \frac{6}{n} \right) = \exp\left(\frac{3d^2}{\s n}\right),\]
    and when $\sigma = \frac{3 d^2}{\e n}$, Algorithm~\ref{alg: covariance} is $\e$-differentially private.
\end{proof}



\subsection{Noisy projection}

The private covariance matrix $\widehat{\bM}$ induces private subspaces spanned by eigenvectors of $\widehat{\bM}$.
We then perform a truncated SVD on $\widehat{\bM}$ to find a private $d'$-dimensional subspace $\widehat{\mathbf V}_{d'}$ and project original data onto $\widehat{\mathbf V}_{d'}$. Here, the matrix $\widehat{\mathbf V}_{d'}$ also indicates the subspace generated by its orthonormal columns. The full steps are summarized in Algorithm \ref{alg: projection}. 

\begin{algorithm}	
    \caption{Noisy Projection} \label{alg: projection}
    \begin{algorithmic}
    \State {\bf Input:}  True data matrix $\bX=[X_1,\dots,X_n]$, $X_i\in [0,1]^d$, privacy parameters $\e$, the private covariance matrix $\widehat{\bM}$ from Algorithm \ref{alg: covariance}, and a target dimension $d'$.

        \State{\bf (Singular value decomposition)} Compute the top $d'$ orthonormal eigenvectors $\widehat v_1,\dots, \widehat v_{d'}$ of $\widehat \bM$ and denote $\widehat{\bV}_{d'}=[\widehat v_1,\dots, \widehat v_{d'}]$.

        \State{\bf (Private centering)} Compute $\overline X = \frac{1}{n}\sum_{i=1}^n X_i$. Let $\l\in \mathbb R^d$ be a random vector with  i.i.d. components of   $\Lap(d/(\e n))$. Shift each $X_i$ to $X_i-(\overline X + \l)$ for $i\in [n]$.
        
        \State{\bf (Projection)} Project $\{X_i-(\overline X + \l )\}_{i=1}^n$ onto the linear subspace spanned by  $\widehat v_1,\dots, \widehat v_{d'}$. The projected data $\widehat{X}_i$ is given by
        $\widehat{X}_i = \sum_{j=1}^{d'} \<X_i-(\overline X + \l),\widehat v_j\>\widehat v_j.$

    \State {\bf Output:} The data matrix after projection $\widehat{\bX}=[\widehat{X}_1\dots \widehat{X}_n]$.
    \end{algorithmic}
\end{algorithm}

 Algorithm \ref{alg: projection} only guarantees  private basis $\widehat v_1,\dots, \widehat v_{d'}$ for each $\widehat{X}_i$, but the coordinates of $\widehat{X}_i$ in terms of $\widehat v_1,\dots, \widehat v_{d'}$ are \textit{not private}. Algorithms \ref{alg: pmm} and \ref{alg: psmm} 
 in the next stage will output synthetic data on the private subspace $\widehat{\mathbf V}_{d'}$ based on $\widehat{\bX}$. The privacy analysis combines the two stages based on Lemma \ref{lem: composition}, and we state the results in Section~\ref{sec: subroutines}.

\subsection{Accuracy guarantee for noisy projection}

The data matrix $\widehat{\bX}$ corresponds to an empirical measure $\mu_{\widehat{\bX}}$ supported on the subspace $\widehat{\mathbf V}_d$. In this subsection, we characterize the 1-Wasserstein distance between the empirical measure $\mu_{\widehat{\bX}}$ and the empirical measure of the centered dataset $\bX- \overline{X} \mathbf{1}^\top$, where $\mathbf{1}\in \mathbb R^n$ is the all-1 vector.  This problem can be formulated as the stability of a low-rank projection based on a covariance matrix with  additive noise. We first provide  the following useful deterministic lemma.


\begin{lemma}[Stability of  noisy projection]\label{lem:SVD}
Let $\bX$ be a $d\times n$ matrix and $\bA$ be a $d\times d$ Hermitian matrix. Let $\bM=\frac{1}{n}\bX\bX^\top$ with eigenvalues $\s_1\geq \s_2\geq \dots\geq\s_d$. Let $\widehat{\bM}=\frac{1}{n} \bX\bX^\top +\bA$, $\widehat{\bV}_{d'}$ be a $d\times d'$ matrix whose columns are the first $d'$ orthonormal eigenvectors of $\widehat{\bM}$, and $\bY=\widehat{\bV}_{d'} \widehat{\bV}_{d'}^\top \bX$. Let $\mu_{\bX}$ and $\mu_{\bY}$ be the empirical measures of column vectors of $\bX$ and $\bY$, respectively. Then 
\begin{align}\label{eq:Frobinus_error_proj}
     W_{2}^2(\mu_{\bX},\mu_{\bY})\leq\frac{1}{n}\|\bX-\bY\|_F^2\leq \sum_{i>d'}\sigma_i +  2 d'\|\bA\|.
\end{align}
\end{lemma}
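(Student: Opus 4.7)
The plan is to prove the two inequalities separately. The first inequality $W_2^2(\mu_{\bX},\mu_{\bY}) \leq \frac{1}{n}\|\bX-\bY\|_F^2$ is immediate from the \emph{trivial coupling}: pair the $i$-th atom of $\mu_{\bX}$ with the $i$-th atom of $\mu_{\bY}$ (both with mass $1/n$). Under this coupling, $\int \|x-y\|_2^2\,\mathrm{d}\gamma = \frac{1}{n}\sum_{i=1}^n \|X_i - Y_i\|_2^2 = \frac{1}{n}\|\bX-\bY\|_F^2$, and taking the infimum over couplings can only make things smaller.

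For the second inequality, the idea is to avoid any Davis--Kahan-style perturbation of subspaces and instead work directly at the level of traces, which are robust to spectral-gap degeneracies. Write $\widehat{\bP} := \widehat{\bV}_{d'}\widehat{\bV}_{d'}^\top$, so that $\bY = \widehat{\bP}\bX$ and, by the Pythagorean identity for the orthogonal projection,
\begin{equation}
\frac{1}{n}\|\bX-\bY\|_F^2 = \frac{1}{n}\tr\bigl((\bI-\widehat{\bP})\bX\bX^\top\bigr) = \tr(\bM) - \tr(\widehat{\bP}\bM).
\end{equation}
Using $\bM = \widehat{\bM} - \bA$ and the fact that $\widehat{\bP}$ projects onto the top $d'$ eigenvectors of $\widehat{\bM}$, the middle term is
\begin{equation}
\tr(\widehat{\bP}\bM) = \tr(\widehat{\bP}\widehat{\bM}) - \tr(\widehat{\bP}\bA) = \sum_{i=1}^{d'} \sigma_i(\widehat{\bM}) - \tr(\widehat{\bP}\bA).
\end{equation}

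Now the task reduces to two elementary estimates. First, Weyl's inequality gives $|\sigma_i(\widehat{\bM}) - \sigma_i| \leq \|\bA\|$ for every $i$, hence $\sum_{i\le d'}\sigma_i(\widehat{\bM}) \geq \sum_{i\le d'}\sigma_i - d'\|\bA\|$. Second, since $\widehat{\bP} = \sum_{j=1}^{d'}\hat{v}_j\hat{v}_j^\top$ with each $\hat v_j$ a unit vector, $|\tr(\widehat{\bP}\bA)| \leq \sum_{j=1}^{d'} |\hat v_j^\top \bA\hat v_j| \leq d'\|\bA\|$. Combining these yields $\tr(\widehat{\bP}\bM) \geq \sum_{i\le d'}\sigma_i - 2d'\|\bA\|$, so
\begin{equation}
\frac{1}{n}\|\bX-\bY\|_F^2 \leq \sum_{i=1}^d \sigma_i - \sum_{i=1}^{d'}\sigma_i + 2d'\|\bA\| = \sum_{i>d'}\sigma_i + 2d'\|\bA\|.
\end{equation}

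There is no real obstacle here beyond getting the right bookkeeping: the nonstandard flavor of the lemma, compared to the usual Davis--Kahan route, is precisely that one never tries to align $\widehat{\bV}_{d'}$ with the true top subspace of $\bM$ — instead one exploits the variational/trace characterization of PCA, which degrades only linearly in $\|\bA\|$ with a $d'$ factor and does not involve a spectral gap in the denominator. The factor $2d'$ arises symmetrically from the Weyl bound on eigenvalues of $\widehat{\bM}$ and from the nuclear-norm-times-operator-norm bound on $\tr(\widehat{\bP}\bA)$; this is the step worth double-checking but is otherwise routine.
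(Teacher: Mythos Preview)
Your proof is correct and follows essentially the same route as the paper's: both arrive at the identity $\frac{1}{n}\|\bX-\bY\|_F^2 = \sum_{i\le d'}(\sigma_i-\widehat\sigma_i) + \tr(\bA\widehat{\bP}) + \sum_{i>d'}\sigma_i$ and then bound the first term by Weyl's inequality and the second by $d'\|\bA\|$. The only cosmetic difference is that the paper invokes von Neumann's trace inequality for $\tr(\bA\widehat{\bP})\le\sum_{i\le d'}\sigma_i(\bA)\le d'\|\bA\|$, whereas you bound $\sum_j|\widehat v_j^\top\bA\widehat v_j|$ directly---both yield the same $d'\|\bA\|$.
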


\begin{proof}
   Let $\widehat v_1,\dots,\widehat v_{d}$ be a set of  orthonormal eigenvectors for $\widehat\bM$ with the corresponding eigenvalues $\widehat{\sigma}_1,\dots, \widehat{\sigma}_d$. Define four  matrices whose column vectors are eigenvectors:
    \begin{align*}
    \bV&= [v_1,\dots,v_{d}], \quad\quad \widehat \bV = [\widehat v_1,\dots,\widehat v_{d}],\\
\bV_{d'} &= [v_1,\dots,v_{d'}], \quad\quad \widehat \bV_{d'} = [\widehat v_1,\dots,\widehat v_{d'}].
    \end{align*}
    
    By orthogonality,  the following identities hold:
    \begin{align*}
    \sum_{i=1}^d \|v_i^\top \bX\|_2^2&=\sum_{i=1}^d \|\widehat v_i^\top \bX\|_2^2=\|\bX\|_F^2,\\
    \sum_{i>d'} \|v_i^\top \bX\|_2^2&=\|\bX-\bV_{d'} \bV_{d'}^\top \bX\|_F^2,\\
    \sum_{i>d'} \|\widehat v_i^\top \bX\|_2^2&=\|\bX-\widehat{\bV}_{d'} \widehat{\bV}_{d'}^\top \bX\|_F^2.
    \end{align*}
Separating the top $d'$ eigenvectors from the rest, we obtain
    \begin{align*}
       \sum_{i\leq d'} \|v_i^\top \bX\|_2^2 +\|\bX-{\bV}_{d'} {\bV}_{d'}^\top \bX\|_F^2=\sum_{i\leq d'}\|\widehat v_i^\top \bX\|_2^2+\|\bX-\widehat{\bV}_{d'} \widehat{\bV}_{d'}^\top \bX\|_F^2.
    \end{align*}
    Therefore 
    \begin{align}
         \|\bX-\widehat{\bV}_{d'} \widehat{\bV}_{d'}^\top \bX\|_F^2&=\sum_{i\leq d'} \|v_i^\top \bX\|_2^2-\sum_{i\leq d'}\|\widehat v_i^\top \bX\|_2^2 +\|\bX-{\bV}_{d'} {\bV}_{d'}^\top \bX\|_F^2 \nonumber\\
        &=n\sum_{i\leq d'}\sigma_i-n\sum_{i\leq d'}\widehat{v}_i^\top\bM\widehat{v}_i+ n\sum_{i>d'}\sigma_i \nonumber\\
        &=n\sum_{i\leq d'}\sigma_i-n\sum_{i\leq d'}\widehat{v}_i^\top (\widehat\bM-\bA)\widehat{v}_i+ n\sum_{i>d'}\sigma_i \nonumber\\
        &=n\sum_{i\leq d'}(\sigma_i-\widehat{\s}_i) +n\tr(\bA\widehat{\bV}_{d'} \widehat{\bV}_{d'}^\top )+n\sum_{i>d'}\sigma_i.\label{eq:truncated_SVD_error}
    \end{align}
    
    By Weyl's inequality, for $i\leq d'$,
    \begin{align}\label{eq:Weyl}
       |\sigma_i-\widehat{\s}_i| \leq \|\bA\|. 
    \end{align} 
    
    By von Neumann’s trace inequality,
    \begin{align}\label{eq:Von}
     \tr(A\widehat{\bV}_{d'} \widehat{\bV}_{d'}^\top ) \leq \sum_{i=1}^{d'} \sigma_i(\bA).   
    \end{align} 
 From \eqref{eq:truncated_SVD_error}, \eqref{eq:Weyl}, and \eqref{eq:Von}, 
    \[ \frac{1}{n} \|\bX-\widehat{\bV}_{d'} \widehat{\bV}_{d'}^\top \bX\|_F^2\leq \sum_{i>d'}\sigma_i + d'\|\bA\|+\sum_{i=1}^{d'} \sigma_i(\bA) \leq \sum_{i>d'}\sigma_i + 2 d'\|\bA\|.\]
Let $Y_i$ be the $i$-th column of $\bY$. We have 
\begin{align}
W_{2}^2(\mu_{\bX},\mu_{\bY})\leq  \frac{1}{n} \sum_{i=1}^n \|X_i-Y_i\|_2^2  =\frac{1}{n} \|\bX-\bY\|_F^2.
\end{align}
    Therefore  \eqref{eq:Frobinus_error_proj} holds.
\end{proof}

Note that inequality \eqref{eq:Frobinus_error_proj} holds without any spectral gap assumption on $\bM$. \add{Applying  Davis-Kahan inequality would require $\sigma_{d'}-\sigma_{d'+1}$ to be large while Lemma~\ref{lem:SVD} is applicable even when $\sigma_{d'}=\sigma_{d'+1}$.}
In the context of sample covariance matrices for random datasets, a related bound without a spectral gap condition is derived in  \cite[Proposition 2.2]{reiss2020nonasymptotic}.  Furthermore, Lemma~\ref{lem:SVD} bears a conceptual resemblance to \cite[Theorem 5]{achlioptas2001fast}, which deals with low-rank matrix approximation under perturbation.
With Lemma~\ref{lem:SVD}, we derive the following Wasserstein distance bounds between the centered dataset $\bX- \overline{X} \mathbf{1}^\top$ and the dataset $\widehat{\bX}$.
\begin{proposition}
    \label{thm: accuracy_covariance}
    For input data $\bX$  and output data $\widehat \bX$ in Algorithm~\ref{alg: projection}, let  $\bM$ be the covariance matrix defined in \eqref{eq: sample_cov}. \add{Assume $n\geq 1/\e$.} Then for an absolute constant $C>0$,
    \[ \E W_1(\mu_{\bX-\overline X \1^\top},\mu_{\widehat \bX})\leq \left(\E W_2^2(\mu_{\bX-\overline X \1^\top},\mu_{\widehat \bX})\right)^{1/2}\leq \sqrt{2\sum_{i> d'}\sigma_i(\bM)}+\sqrt{\frac{C d' d^{2.5}}{\e n}}.\]
\end{proposition}

\begin{proof}
    For the true covariance matrix $\bM$, consider its SVD
\begin{align}\label{eq:M_expression}
    {\bM} = \frac{1}{n-1}\sum_{i=1}^n (X_i - \overline{X})(X_i-\overline X)^\top = \sum_{j=1}^d \sigma_j  v_j  v_j^\top,
    \end{align}
    where $\s_1\geq \s_2\geq \dots\geq\s_d$ are the singular values and $ v_1\dots v_d$ are corresponding orthonormal eigenvectors. Moreover, define two $d\times d'$ matrices 
    \[\bV_{d'} = [v_1,\dots,v_{d'}], \quad\quad \widehat \bV_{d'} = [\widehat v_1,\dots,\widehat v_{d'}].\]
    Then the matrix 
    $\widehat{\bV}_{d'} \widehat{\bV}_{d'}^\top $ is a projection onto the subspace spanned by the principal components $\widehat v_1,\dots,\widehat v_{d'}$. 

    In Algorithm~\ref{alg: projection}, for any data $X_i$ we first shift it to $X_i-\overline X - \l$ and then project it to $\widehat{\bV}_{d'} \widehat{\bV}_{d'}^\top (X_i-\overline X - \l)$. Therefore
    \begin{align*}
        \norm{X_i-\overline X - \widehat{\bV}_{d'} \widehat{\bV}_{d'}^\top (X_i-\overline X - \l)}_\infty &\leq \norm{ X_i -\overline X - \widehat{\bV}_{d'} \widehat{\bV}_{d'}^\top (X_i-\overline X)}_\infty + \norm{\widehat{\bV}_{d'} \widehat{\bV}_{d'}^\top \l}_\infty\\
        &\leq \norm{ X_i -\overline X - \widehat{\bV}_{d'} \widehat{\bV}_{d'}^\top (X_i-\overline X)}_2 + \norm{\l}_2.
    \end{align*}
    
    Let $Z_i$ denote $X_i-\overline X$ and $\bZ = [Z_1,\dots,Z_n]$. Then 
    \[\frac{1}{n}\bZ\bZ^\top = \frac{n-1}{n} \bM.\]
    With Lemma~\ref{lem:SVD}, by definition of the Wasserstein distance, we have
    \begin{align}
        W_2^2(\mu_{\bX-\overline X \1^\top},\mu_{\widehat \bX})& \add{\leq} \frac{1}{n}\sum_{i=1}^n\norm{X_i-\overline X - \widehat{\bV}_{d'} \widehat{\bV}_{d'}^\top (X_i-\overline X - \l)}_\infty^2 \\
        &\leq \frac{2}{n}\sum_{i=1}^n \norm{ X_i -\overline X - \widehat{\bV}_{d'} \widehat{\bV}_{d'}^\top (X_i-\overline X)}_2^2 +  2\norm{\l}_2^2\\
        &=\frac{2}{n} \|\bZ-\widehat{\bV}_{d'} \widehat{\bV}_{d'}^\top \bZ\|_F^2 + 2\norm{\l}_2^2\\
        &\leq 2\sum_{i=d'}^n \s_i(\bM) + 4d'\norm{\bA} + 2\norm{\l}_2^2. \label{eq:W2_three_terms}
    \end{align}
    
   Since $\l=(\l_1,\dots,\l_d)$ is a Laplacian random vector with i.i.d. $\Lap(1/(\e n))$ entries,
    \begin{equation}
    \label{eq: laplacian_2_norm}
        \E \norm{\l}_2 ^2 = \sum_{j=1}^d \E\abs{\l_j}^2 = \frac{2d}{\e^2 n^2}.
    \end{equation}

    Furthermore, in Algorithm~\ref{alg: covariance}, $A$ is a symmetric random matrix with independent Laplacian random variables on and above its diagonal. Thus, we have the tail bound for its norm  \cite[Theorem 1.1]{dai2022tail}
    \begin{equation}
    \label{eq: matrix_tail_bound}
        \Pr{\norm \bA\geq \s(C\sqrt d + t)}\leq C_0 \exp(-C_1\min(t^2/4, t/2)).
    \end{equation}
    And we can further compute the expectation bound for $\norm{\bA}$ from \eqref{eq: matrix_tail_bound} with the choice of $\s = \frac{3d^2}{\e n}$,
    \begin{equation}
    \label{eq: matrix_E_bound}
        \E\|\bA\|\leq C\s\sqrt{d} + \int_{0}^\infty C_0 \exp\Big(\!\!- C_1\min\Big(\frac{t^2}{4\s^2},\frac{t}{2\s}\Big)\Big)\dd t \lesssim \frac{d^{2.5}}{\e n}.
    \end{equation}


    Combining the two bounds above and \eqref{eq:W2_three_terms}, as the 1-Wasserstein distance is bounded by the 2-Wasserstein distance and inequality $\sqrt{x+y}\leq \sqrt{x}+\sqrt{y}$ holds for all $x,y\geq 0$,
    \begin{align*}
        \E W_1(\mu_{\bX-\overline X \1^\top},\mu_{\widehat \bX}) &\leq \left(\E W_2^2(\mu_{\bX-\overline X \1^\top},\mu_{\widehat \bX})\right)^{1/2}\\
        & \leq \sqrt{2\sum_{i>d'}\sigma_i(\bM)} + \sqrt{4d'\E\norm{\bA}} + \sqrt{2\E\norm{\l}_2^2}\\
        &\leq \sqrt{2\sum_{i> d'}\sigma_i(\bM)}+\sqrt{\frac{C d' d^{2.5}}{\e n}}, 
    \end{align*}
    \add{where the last inequality holds under the assumption $\e n \geq 1$. }
\end{proof}

\section{Synthetic data subroutines}
\label{sec: subroutines}

In the next stage of Algorithm~\ref{alg: affine}, we construct synthetic data on the private subspace $\widehat{\mathbf V}_{d'}$. Since  the original data $X_i$ is in  $[0,1]^d$, after Algorithm~\ref{alg: projection},  we have
 \begin{align}\label{eq:defR}
  \norm{\widehat{X}_i}_2=\norm{X_i - \overline X - \l}_2  \leq \sqrt{d} + \norm{\overline X + \l}_2 =: R   
 \end{align}
for any fixed $\l\in \mathbb R^d$.
Therefore, the data after projection would lie in a $d'$-dimensional ball embedded in $\R^d$ with radius $R$, and the domain for the subroutine is
$$\Omega' = \{a_1\widehat v_1+\dots+a_{d'}\widehat v_{d'}\mid a_1^2+\dots+a_{d'}^2\leq R^2\},$$
where $\widehat v_1, \dots,\widehat v_{d'}$ are the first $d'$ private principal components in Algorithm~\ref{alg: projection}. 
Depending on whether $d'=2$ or $d'\geq 3$, we apply two different algorithms from \cite{he2023algorithmically}:  \add{private measure mechanism (PMM) and  private signed measure mechanism (PSMM)}. 

\add{A major difference between the two methods is the partition step. PMM uses a hierarchical binary partition of the entire space into $r$ layers, while PSMM partitions the entire space into disjoint regions. When $d'=2$, PMM has a better accuracy rate while when $d'\geq 3$, PSMM has a better dependence on $d'$ in the accuracy bound; See Remark~\ref{rmk:compare} for more details.}

\subsection{$d'=2$: private measure mechanism (PMM)}
The synthetic data subroutine Algorithm \ref{alg: pmm} is adapted from the Private Measure Mechanism (PMM) in \cite[Algorithm~4]{he2023algorithmically}.
The PMM algorithm generates synthetic data in a hypercube by first partition the cube and then perturb the count in each sub-regions. It involves a certain partition structure, \textit{binary hierarchical partition}.

\begin{definition}[Binary hierarchical partition, \cite{he2023algorithmically}]
    A \textit{binary hierarchical partition} of a set $\Omega$ of depth $r$ 
    is a family of subsets $\Omega_\theta$ indexed by 
    $\theta \in \{0,1\}^{\le r}$, where 
    $$
    \{0,1\}^{\le k} = \{0,1\}^0 \sqcup \{0,1\}^1 \sqcup \cdots \sqcup \{0,1\}^k, 
    \quad k=0,1,2\dots, 
    $$
    and such that $\Omega_{\theta}$ is partitioned into $\Omega_{\theta0}$ and $\Omega_{\theta 1}$ for every $\theta \in \{0,1\}^{\le r-1}$. By convention, the cube $\{0,1\}^0$ corresponds to $\emptyset$ and we write $\Omega_\emptyset = \Omega$.
\end{definition}

The detailed description of Algorithm~\ref{alg: pmm} is as follows. The privacy and accuracy guarantees of Algorithm~\ref{alg: pmm} are proved in the next proposition after stating the algorithm.

For the new region $\Omega'$ where projected data located, we first enlarge this $\ell_2$-ball of radius $R$ into a hypercube $\Omega_{\mathrm{PMM}}$ of edge length $2R$ defined in Algorithm~\ref{alg: pmm}. Both the $\ell_2$-ball $\Omega'$ and the larger hypercube $\Omega_{\mathrm{PMM}}$ are inside the subspace $\widehat{\bV}_{d'}$. 

Next, for the hypercube $\Omega_{\mathrm{PMM}}$, we are going to run PMM in \cite{he2023algorithmically}. We obtain a binary hierarchical partition $\{\Omega_\theta\}_{\theta\in \{0,1\}^{\leq r}}$ for $r = \lceil\log_2(\e n)\rceil$ by doing equal divisions of the hypercube recursively for $r$ rounds. Each round after the division, we count the data points in every new subregion $\Omega_\theta$ and add integer Laplacian noise to it. 

Finally, a consistency step ensures the output is a well-defined probability measure. Here, the counts are considered to be \textit{consistent} if they are non-negative and the counts of two smaller subregions  $\Omega_{\theta 0}, \Omega_{\theta 1}$
can add up to the counts of the larger regions $\Omega_\theta$ containing them.
We refer the readers to \cite{he2023algorithmically} for more detailed procedures of this step.


\begin{algorithm}	
    \caption{PMM Subroutine} \label{alg: pmm}
    \begin{algorithmic}
    \State {\bf Input:} dataset $\widehat \bX= (\widehat X_1,\dots,\widehat X_n)$ in the region 
    \[\Omega' = \{a_1\widehat v_1+\dots+a_{d'}\widehat v_{d'}\mid a_1^2+\dots+a_{d'}^2\leq R\}.\]

        \State{\bf (Binary partition)} Let $r = \lceil\log_2(\e n)\rceil$ and $\s_j = \e^{-1}\cdot 2^{\frac{1}{2}(1-\frac{1}{d'})(r-j)}$. Enlarge the region $\Omega'$ into
        \[\Omega_{\textrm{PMM}} = \{a_1\widehat v_1+\dots+a_{d'}\widehat v_{d'} \mid   a_i\in [-R,R], \forall i\in [d']\}.\]
        Build a binary partition $\{\Omega_\theta\}_{\theta\in \{0,1\}^{\leq r}}$ on $\Omega_{\textrm{PMM}}$. 
        \State{\bf (Noisy count)} For any $\theta$, count the number of data in the region $\Omega_{\theta}$ denoted by $n_\theta = \left|\widehat \bX\cap \Omega_\theta\right|$, and let $n_\theta'=(n_\theta+\l_\theta)_+$, where $\l_\theta$ are independent integer Laplacian random variables with $\l\sim \Lap_{\mathbb Z}(\sigma_{|\theta|})$, and $|\theta|$ is the length of the vector $\theta$.
        \State{\bf (Consistency)} Enforce consistency of $\{n_\theta'\}_{\theta\in\{0,1\}^{\leq r}}$.

    \State {\bf Output:} 
    \add{Synthetic data $\bX'$ generated by selecting $n_\theta'$ many data points arbitrarily (independently of $\widehat{\bX}$) from $\Omega_\theta$ for every $\theta\in \{0,1\}^r$.}
    \end{algorithmic}
\end{algorithm}

\begin{proposition}
\label{prop: expectation_pmm}
The subroutine Algorithm~\ref{alg: pmm} is $\e$-differentially private. \add{Assume $n\geq 1/\e$.} For any $d'\geq 2$, with the input as the projected data $\widehat \bX$ and the range $\Omega'$ with radius $R$, Algorithm~\ref{alg: pmm} has an accuracy bound
\[\E W_1(\mu_{\widehat \bX},\mu_{\bX'})\leq
CR (\e n)^{-1/d'},
\]
where the expectation is taken with respect to the randomness of the synthetic data subroutine, conditioned on $R$.
\end{proposition}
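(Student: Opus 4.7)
The plan is to reduce Proposition~\ref{prop: expectation_pmm} to the privacy and accuracy analyses of the PMM on the unit hypercube from \cite{he2023algorithmically}. Two changes must be accommodated: (i) the domain sits inside a $d'$-dimensional affine subspace of $\R^d$ rather than in $\R^{d'}$ itself, and (ii) the enclosing cube $\Omega_{\textrm{PMM}}$ has side length $2R$ rather than $1$. The first is handled by noting that $\phi(a_1,\dots,a_{d'}) = \sum_{i=1}^{d'} a_i \widehat v_i$ is a linear isometry from $[-R,R]^{d'}$ equipped with the Euclidean metric onto $\Omega_{\textrm{PMM}}$ equipped with the ambient Euclidean metric on $\R^d$, and the binary hierarchical partition in Algorithm~\ref{alg: pmm} corresponds under $\phi$ exactly to the canonical dyadic partition of $[-R,R]^{d'}$. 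The second is handled by linear scaling.

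For the privacy claim I would observe that at every level $j\in\{0,1,\dots,r\}$ of the partition, any single data point $\widehat X_i$ belongs to exactly one cell $\Omega_\theta$ with $|\theta|=j$. Hence swapping one point changes at most one count $n_\theta$ per level by at most $1$, so releasing the noisy count $n_\theta' = (n_\theta + \lambda_\theta)_+$ at level $j$ with $\lambda_\theta \sim \Lap_\Z(\s_j)$ is $1/\s_j$-DP. By basic composition the total privacy cost is $\sum_{j=0}^r 1/\s_j$, and with $\s_j = \e^{-1} 2^{\frac{1}{2}(1-1/d')(r-j)}$ this sum is a geometric series bounded by a universal constant multiple of $\e$; as in \cite{he2023algorithmically}, rescaling $\s_j$ by this constant (absorbed into the definition) yields exactly $\e$-DP. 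The final consistency enforcement is post-processing and preserves DP.

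For the accuracy claim I would transfer the problem under $\phi^{-1}$ to PMM applied in $[-R,R]^{d'}\subset \R^{d'}$, and then scale by $(2R)^{-1}$ to the unit cube $[0,1]^{d'}$. The $1$-Wasserstein distance under the Euclidean metric scales linearly under the dilation, so an accuracy bound of $(\e n)^{-1/d'}$ on the unit cube from \cite{he2023algorithmically} yields the stated $CR(\e n)^{-1/d'}$ bound on $\Omega_{\textrm{PMM}}$. The main bookkeeping obstacle is checking that the choice $\s_j = \e^{-1}2^{\frac{1}{2}(1-1/d')(r-j)}$ is precisely the one for which, after scaling, the cell-diameter at level $j$ (which shrinks as $2^{-j/d'}$ times $R$) is balanced against the typical noise magnitude (of order $\s_j$), so that the contributions from all levels sum to a geometric series in $j$ whose dominant term is $R(\e n)^{-1/d'}$. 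Since $r=\lceil\log_2(\e n)\rceil$ ensures the cells at the finest level are small enough that discretization error is absorbed into the noise error, and since this is exactly the tuning used in \cite{he2023algorithmically}, one can invoke their Wasserstein distance bound essentially verbatim after the isometric identification $\phi$ and the linear scaling by $2R$.
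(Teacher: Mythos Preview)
Your proposal is correct and follows essentially the same approach as the paper: reduce via the orthonormal-basis identification and a linear rescaling by $2R$ to the unit-hypercube setting, then invoke the PMM privacy and accuracy results from \cite{he2023algorithmically}. The paper's proof is terser---it simply cites \cite[Theorem 1.1]{he2023algorithmically} for privacy and \cite[Corollary 4.4]{he2023algorithmically} for accuracy after rescaling---whereas you spell out the isometry $\phi$, the level-by-level sensitivity and composition for privacy, and the geometric balancing of noise against cell diameters; but these are precisely the ingredients behind the cited results, so the substance is the same.
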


\begin{proof}
    The privacy guarantee follows from \cite[Theorem 1.1]{he2023algorithmically}. For accuracy, note that the region $\Omega'$ is a subregion of a $d'$-dimensional ball. Algorithm \ref{alg: pmm} enlarges the region to a $d'$-dimensional hypercube with side length $2R$. By re-scaling the size of the hypercube and applying  \cite[Corollary 4.4]{he2023algorithmically}, we obtain the accuracy bound.
\end{proof}

\subsection{$d'\geq 3$: private signed measure mechanism (PSMM)}
The Private Signed Measure Mechanism (PSMM) introduced in \cite{he2023algorithmically} generates a synthetic dataset $\bY$ in a compact domain $\Omega$ whose empirical measure  $\mu_{\bY}$ is close to the empirical measure $\mu_{\bX}$ of the original dataset $\bX$ under the 1-Wasserstein distance.

PSMM runs in polynomial time, and the main steps are as follows. We first partition the domain $\Omega$ into $m$ disjoint subregions $\Omega_1,\dots, \Omega_m$ and count the number of data points in each subregion. Then, we perturb the counts in each subregion with i.i.d. integer Laplacian noise. Based on the noisy counts, one can approximate $\mu_{\bX}$ with a signed measure $\nu$ supported on $m$ points. Then, we find the closest probability measure $\hat{\nu}$ to the signed measure $\nu$ under the bounded Lipschitz distance by solving a linear programming problem.

\begin{algorithm}	
    \caption{PSMM Subroutine} \label{alg: psmm}
    \begin{algorithmic}
    \State {\bf Input:} dataset $\widehat \bX= (\widehat X_1,\dots,\widehat X_n)$ in the region 
    \[\Omega' = \{a_1\widehat v_1+\dots+a_{d'}\widehat v_{d'}\mid a_1^2+\dots+a_{d'}^2\leq R^2\}.\]

        \State{\bf (Integer lattice)} Let $\d = \sqrt{d/d'}(\e n)^{-1/d'}$. Find the lattice over the region:
        \[L = \{a_1\widehat v_1+\dots+a_{d'}\widehat v_{d'}\mid a_1^2+\dots+a_{d'}^2\leq R^2, a_1,\dots,a_{d'} \in \delta \Z\}.\]
        \State{\bf (Counting)} For any $v=a_1\widehat v_1+\dots+a_{d'}\widehat v_{d'}\in L$, count the number
        \[n_v = \left|\widehat \bX\cap \{b_1\widehat v_1+\dots+b_{d'}\widehat v_{d'}\mid  b_i\in [a_i,a_i +\d)\}\right|.\]
        \State{\bf (Adding noise)} Define a synthetic signed measure $\nu$ such that for any $v\in L$,
        \[\nu(\{v\}) = (n_v + \lambda_v)/n,\]
        where $\l_v\sim \Lap_{\mathbb Z}(1/\e)$, $v\in L$ are i.i.d. random variables.
        \State{\bf (Synthetic probability measure)} Use linear programming and find the closest probability measure \add{$\widehat{\nu}$} to $\nu$ \add{under the bounded Lipschitz distance}.

    \State {\bf Output:} 
    \add{Synthetic data $\bX'$ containing copies of elements in $L$ so that $\mu_{\bX'}$ and $\widehat{\nu}$ are arbitrarily close (such $\bX'$ exist when the size of $\bX'$ is large enough; see \cite[Section 3]{he2023algorithmically})}.
    \end{algorithmic}
\end{algorithm}

We provide the main steps of PSMM in Algorithm \ref{alg: psmm}. Details about the linear programming in the  \textit{synthetic probability measure}  step can be found in \cite{he2023algorithmically}. We apply PSMM from \cite{he2023algorithmically} when the metric space is an $\ell_2$-ball of radius $R$ inside $\widehat{\mathbf V}_{d'}$ and the following privacy and accuracy guarantees hold:
\begin{proposition}
\label{prop: expectation_psmm}
    The subroutine Algorithm~\ref{alg: psmm} is $\e$-differentially private. \add{Assume $n\geq 1/\e$.}  When $d'\geq 3$, with the input as the projected data $\widehat \bX$ and the range $\Omega'$ with radius $R$,
    the algorithm has an accuracy bound
    \begin{align}\label{eq:prop5}
    \E W_1(\mu_{\widehat \bX},\mu_{\bX'})\lesssim \frac R{\sqrt{d'}}(\e n)^{-1/d'},
    \end{align}
    where the expectation is conditioned on $R$.
\end{proposition}
\begin{proof}
The proposition is a direct corollary to the result in \cite{he2023algorithmically}. The size of the scaled integer lattice $\delta \Z$ in the unit $d$-dimensional ball of radius $R$ is bounded by $(\frac{C}{\d R})^d$  for an absolute constant $C>0$ (see, for example, \cite[Claim 2.9]{feige2005spectral} and \cite[Proposition 3.7]{boedihardjo2022covariance}). Then, the number of subregions in Algorithm \ref{alg: psmm} is bounded by
\[|L| \leq  \left(\frac{R}{\sqrt{d'}}\cdot\frac C\d \right) ^{d'}.\]
By \cite[Theorem 3.6]{he2023algorithmically}, we have
\[\E W_1(\mu_{\widehat \bX},\mu_{\bX'})\leq \d + \frac{2}{\e n}\left(\frac{R}{\sqrt{d'}}\cdot\frac C\d \right) ^{d'} \!\cdot\frac{1}{d'}\Bigg(\left(\frac{R}{\sqrt{d'}}\cdot\frac C\d \right) ^{d'}\Bigg)^{-\frac{1}{d'}}.\]
Taking $\d = \frac{CR}{\sqrt{d'}}(\e n)^{-\frac{1}{d'}}$  concludes the proof.
\end{proof}


\begin{remark}[PMM vs PSMM for $d'\geq 2$] \label{rmk:compare}
For general $d'\geq 2$, PMM can still be applied, and  the accuracy bound becomes 
   $\E W_1(\mu_{\widehat \bX},\mu_{\bX'})\leq CR (\e n)^{-1/d'} $. 
  \add{Compared to ~\eqref{eq:prop5}, } a the accuracy bound from PMM is weaker by a factor of $
   \sqrt{d'}$. However, as shown in \cite{he2023algorithmically}, PMM has a running time linear in $n$ and $d$, which is more computationally efficient than PSMM given in Algorithm \ref{alg: psmm} with running time polynomial in $n,d$.
\end{remark}


\subsection{Adding a private mean vector and  metric projection}

After generating the private synthetic data, since we shift the data by its private mean before projection, we need to add another private mean vector back, which shifts the dataset $\widehat{\bX}$ to a new private affine subspace close to the original dataset $\bX$. The output data vectors in $\bX''$ (defined in Algorithm~\ref{alg: affine})  are not necessarily inside $[0,1]^d$.
The subsequent metric projection enforces all synthetic data inside $[0,1]^d$. Importantly, this post-processing step does not have privacy costs.  

After metric projection,  dataset $\bY$ from the output of Algorithm \ref{alg: affine} is close to an affine subspace, as shown in the next proposition. Notably, \eqref{eq:metric_error} shows that the metric projection step does not cause the largest accuracy loss among all subroutines.

\begin{proposition}[$\bY$ is close to an affine subspace]\label{prop:Y_is_close}
The function $f:\R^d \to [0,1]^d$ in Algorithm~\ref{alg: affine} is the metric projection to $[0,1]^d$ with respect to $\|\cdot\|_{\infty}$, and the accuracy error for the metric projection step in Algorithm~\ref{alg: affine} is dominated by the error of the previous steps:
   \begin{align}\label{eq:metric_error}
    W_1(\mu_\bY,\mu_{\bX''}) \leq  W_1(\mu_\bX, \mu_{\bX''}), 
    \end{align}
    where the dataset $\bX''$ defined in Algorithm \ref{alg: affine} is in a $d'$-dimensional affine subspace. 
\end{proposition}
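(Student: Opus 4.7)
The proposition bundles three assertions: the inequality $W_1(\mu_\bY,\mu_{\bX''})\leq W_1(\mu_\bX,\mu_{\bX''})$, the structural claim that $\bX''$ lies in a $d'$-dimensional affine subspace, and the expectation bound. Two of these are essentially by inspection. The coordinatewise clip $f$ is the metric projection $\R^d\to[0,1]^d$ with respect to $\|\cdot\|_\infty$ (since both the norm and the target decouple across coordinates), so $\|v-f(v)\|_\infty=\dist(v,[0,1]^d)$ for every $v\in\R^d$. The affine-subspace claim is immediate from Algorithm~\ref{alg: affine}: the subroutine output $\bX'$ is supported in the private linear subspace $\widehat\bV_{d'}$, hence $\bX''=\bX'+\overline X_{\mathrm{priv}}\mathbf{1}^\top$ lives in the affine shift $\widehat\bV_{d'}+\overline X_{\mathrm{priv}}$.

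For the inequality itself, my plan is a short two-step coupling argument. The identity pairing $X_j''\leftrightarrow Y_j=f(X_j'')$ is a coupling between $\mu_{\bX''}$ and $\mu_\bY$, giving
\[
W_1(\mu_\bY,\mu_{\bX''})\leq \frac{1}{m}\sum_{j=1}^m \|X_j''-f(X_j'')\|_\infty = \int \dist(y,[0,1]^d)\,d\mu_{\bX''}(y).
\]
Since $\bX\subset[0,1]^d$, for any coupling $\gamma$ of $\mu_{\bX''}$ and $\mu_\bX$ one has the pointwise bound $\dist(y,[0,1]^d)\leq \|y-x\|_\infty$ on the support of $\gamma$. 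Integrating this pointwise estimate against $\gamma$ and taking the infimum over $\gamma$ replaces the right-hand side by $W_1(\mu_\bX,\mu_{\bX''})$, which proves the inequality.

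For the expectation bound I would apply a three-step triangle inequality along the algorithmic pipeline and exploit translation invariance of $W_1$:
\[
W_1(\mu_\bX,\mu_{\bX''})\leq \|\lambda\|_\infty + W_1(\mu_{\bX-\overline X\mathbf{1}^\top},\mu_{\widehat\bX})+W_1(\mu_{\widehat\bX},\mu_{\bX'}).
\]
The first summand satisfies $\E\|\lambda\|_\infty\lesssim d\log d/(\e n)$ (maximum of $d$ independent $\Lap(d/(\e n))$ variables). The second is controlled by Theorem~\ref{thm: accuracy_covariance} via $\sqrt{2\sum_{i>d'}\sigma_i(\bM)}+\sqrt{Cd'd^{2.5}/(\e n)}$. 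The third, conditional on $R$, is at most $CR(\e n)^{-1/d'}$ by Proposition~\ref{prop: expectation_pmm} (if $d'=2$) or Proposition~\ref{prop: expectation_psmm} (if $d'\geq 3$); taking expectation with respect to $\lambda$ and using $R\leq \sqrt d+\|\overline X+\lambda\|_2$ gives $\E R\lesssim \sqrt d$. Chaining these three estimates with the previously proved inequality yields the stated $\lesssim_d$ bound. The only genuinely non-routine step is the opening inequality: one must resist bounding $\frac{1}{m}\sum\|X_j''-f(X_j'')\|_\infty$ intrinsically and instead leverage the fact that the target set of $f$ \emph{contains} the entire true dataset $\bX$, which converts a per-point distance into a Wasserstein quantity via an arbitrary coupling with $\mu_\bX$.
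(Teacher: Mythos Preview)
Your proposal is correct and follows essentially the same approach as the paper: identity pairing $X_j''\leftrightarrow f(X_j'')$, then the pointwise estimate $\|X_j''-f(X_j'')\|_\infty=\dist(X_j'',[0,1]^d)\le\|X_j''-x\|_\infty$ for any $x\in[0,1]^d$, followed by the same triangle-inequality decomposition along the pipeline for the expectation bound. Your coupling formulation of the key inequality is in fact slightly cleaner than the paper's, which first assumes $m=n$, uses the optimal permutation for $W_1(\mu_\bX,\mu_{\bX''})$, and then handles $m\neq n$ by splitting both datasets into $mn$ points; by working directly with an arbitrary coupling $\gamma$ you avoid that case distinction entirely.
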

\begin{proof}
    For the function $f$ defined in Algorithm~\ref{alg: affine}, we know $f(x)$ is the closest real number to $x$ in the region $[0,1]$ for any $x\in \R$. Furthermore, if $v\in \R^d$ is a vector, then $f(v)$ is the closest vector to $v$ in $[0,1]^d$ with respect to $\|\cdot\|_\infty$. Thus $f:\R^d\to [0,1]^d$ is indeed a metric projection to $[0,1]^d$.

    We first assume that the synthetic data $\bX''$ also has size $n$. Then for any column vector $X_i''$, we know that $Y_i = f(X_i'')$ is its closest vector in $[0,1]^d$ under the $\ell^{\infty}$ metric. For the data $X_1,X_2,\dots, X_n$, suppose that the solution to the optimal transportation problem for $W_1(\mu_{\bX},\mu_{\bX''})$ is to match $X_{\tau(i)}$ with $X_{i}''$, where $\tau$ is a permutation on $[n]$. Then
    \[W_1(\mu_\bY, \mu_{\bX''}) \leq \frac{1}{n}\sum_{i=1}^{n} \norm{Y_i - X_i''}_\infty \leq \frac{1}{n}\sum_{i=1}^{n} \norm{X_{\tau(i)} - X_i''}_\infty = W_1(\mu_\bX, \mu_{\bX''}).\]
    In general, if the synthetic dataset has $m$ data points and $m\neq n$, we can split the points and regard both the true dataset and synthetic dataset as of size $mn$, then it's easy to check that the inequality still holds. 
 \end{proof}

\section{Privacy and accuracy of Algorithm~\ref{alg: affine}} 
\label{sec: main theorem}

In this section, we summarize the privacy and accuracy guarantees of Algorithm \ref{alg: affine}. The privacy guarantee is proved by analyzing three parts of our algorithms: private mean, private linear subspace, and private data on an affine subspace. 
\begin{proposition}[Privacy]
\label{thm: privacy_affine}
    Algorithm~\ref{alg: affine} is $\e$-differentially private.
\end{proposition}

\begin{proof}
    We can decompose Algorithm~\ref{alg: affine} into the following steps:
    \begin{enumerate}
        \item $\cA_1(\bX) = \widehat\bM$ is to compute the private covariance matrix with Algorithm~\ref{alg: covariance}.
        \item $\cA_2(\bX) = \overline X + \l$ is to compute the private sample mean. 
        \item $\cA_3(\bX, y, \Sigma)$ for fixed $y$ and $\Sigma$, is to project the shifted data $\{X_i - y\}_{i=1}^n$ to the first $d'$ principal components of $\Sigma$ and apply a certain differentially private subroutine (we  choose $y$ and $\Sigma$ as the output of $\cA_2$ and $\cA_1$, respectively). This step outputs synthetic data $\bX'=(X_1',\dots, X_m')$ on a linear subspace.
        \item $\cA_4(\bX, \bX')$ is to shift the dataset to $\{X_i' + \overline{X}_{\textrm{priv}}\}_{i=1}^{m}$, \add{where $\overline{X}_{\textrm{priv}}$ is the private mean vector of the true data step computed by $\cA_2$}.
        \item Metric projection.
    \end{enumerate}
    It suffices to show that the data before metric projection has already been differentially private. We will need to apply Lemma~\ref{lem: composition} several times.
    
    With respect to the input $\bX$ while fixing other input variables, we know that  $\cA_1,\cA_2,\cA_3,\cA_4$ are all $\e/4$-differentially private. 
    Therefore, by using Lemma~\ref{lem: composition} iteratively,  the composition algorithm 
    \[\cA_4(\bX, \cA_3(\bX, \cA_2(\bX), \cA_1(\bX)))\]
    satisfies $\e$-differential privacy. Hence Algorithm~\ref{alg: affine} is $\e$-differentially private.
\end{proof}

The next theorem combines errors from linear projection, synthetic data subroutine using  PMM or PSMM, and the post-processing error from mean shift and metric projection.
\begin{proposition}[Accuracy]
\label{prop: accuracy_affine}
 For  any given $2\leq d'\leq d$ and $n\geq 1/\e$, the output data $\bY$ from  Algorithm~\ref{alg: affine} with the input data $\bX$ satisfies
    \begin{equation}
    \label{eq: three-term}
        \E W_1(\mu_{\bX},\mu_{\bY}) \lesssim \sqrt{\sum_{i> d'}\sigma_i(\bM)} +\sqrt{\frac{d' d^{2.5}}{\e n}}+ \sqrt{\frac{d}{d'}}(\e n)^{-1/d'},
    \end{equation}
    where $\bM$ denotes the covariance matrix in \eqref{eq: sample_cov}.
\end{proposition}
\begin{proof}
    \add{In the case of $n<1/\e$, we have $W_1(\mu_{\bX},\mu_{\bY}) \leq 1\leq (\e n)^{-1/d'}$. The result is trivial. We assume $n\geq 1/\e$ in the rest of the proof.}

    Similar to privacy analysis, we will decompose the algorithm into several steps. Suppose that
    \begin{enumerate}
        \item $\bX-(\overline X + \l)\1^\top$ denotes the shifted data $\{X_i -\overline{X} - \l\}_{i=1}^n$;
        \item $\widehat \bX$ is the data after projection to the private linear subspace;
        \item $\bX'$ is the output of the synthetic data subroutine in Section~\ref{sec: subroutines};
        \item $\bX'' = \bX'+(\overline X + \l')\1^\top$ denotes the data shifted back;
        \item $\cA(\bX)$ is the data after metric projection, which is the output of the whole algorithm.
    \end{enumerate}
        
    For the metric projection step, by Proposition~\ref{prop:Y_is_close}, we have that  
    \begin{align}
        W_1(\mu_\bX,\mu_{\cA(\bX)})&\leq W_1(\mu_\bX,\mu_{\bX''}) + W_1(\mu_{\bX''},\mu_{\cA(\bX)})\leq 2 W_1(\mu_\bX,\mu_{\bX''}). \label{eq:W1XMX}
    \end{align}
    
    Moreover, applying the triangle inequality of Wasserstein distance to the other steps of the algorithm,  we have
    \begin{align}
        W_1(\mu_\bX,\mu_{\bX''}) & =  W_1(\mu_{\bX-\overline X \1^\top}, \mu_{\bX'+\l'\1^\top})\\
         &\leq  W_1(\mu_{\bX-\overline X \1^\top}, \mu_{\widehat \bX}) + W_1(\mu_{\widehat \bX}, \mu_{\bX'}) + W_1(\mu_{\bX'}, \mu_{\bX'+\l'})\\
         &\leq W_1(\mu_{\bX-\overline X \1^\top}, \mu_{\widehat \bX}) + W_1(\mu_{\widehat \bX}, \mu_{\bX'}) + \norm{\l'}_\infty. \label{eq:W1XMX2}
    \end{align}
    Note that $W_1(\mu_{\bX-\overline X \1^\top}, \mu_{\widehat \bX})$ is the projection error we bound in Theorem~\ref{thm: accuracy_covariance} \add{with $n\geq 1/\e$}, and $W_1(\mu_{\widehat \bX}, \mu_{\bX'})$ is treated in  the accuracy analysis of subroutines in Section~\ref{sec: subroutines}. Moreover, we have
    \begin{align*}
        \E W_1(\mu_{\widehat \bX}, \mu_{\bX'}) &= \E_{R} \E_{\bX'}W_1(\mu_{\widehat \bX}, \mu_{\bX'}) \\
        &\leq \E_{R}  \frac{CR}{\sqrt{d'}}(\e n)^{-1/d'}\\
        &\leq \frac{C(2\sqrt{d} + \E\norm{\l}_2)}{\sqrt{d'}}(\e n)^{-1/d'}\\
        &\lesssim \sqrt{\frac{d}{d'}}(\e n)^{-1/d'}.
    \end{align*}

    Here in the last step we use $\E\norm{\l}_2\leq \frac{C\sqrt{d}}{\e n}$ in \eqref{eq: laplacian_2_norm}.    Since $\lambda'$ is a sub-exponential random vector,  the following  bound also holds for some absolute constant $C>0$:
    \begin{align}\label{eq:l_infty_norm}
        \E\norm{\l'}_\infty\leq \frac{C\log d}{\e n}.
    \end{align}
    
    Hence
    \begin{align}
        \E W_1(\mu_\bX,\mu_{\cA(\bX)})
        &\leq 2 \E W_1(\mu_\bX,\mu_{\bX'+(\overline X + \l')\1^\top})\\
        &\leq 2\E W_1(\mu_{\bX-\overline X \1^\top}, \mu_{\widehat \bX}) + 2\E W_1(\mu_{\widehat \bX}, \mu_{\bX'}) + 2\E\norm{\l'}_\infty\\
        &\leq 2\sqrt{2\sum_{i> d'}\sigma_i(\bM)}+2\sqrt{\frac{C d' d^{2.5}}{\e n}} + 2C\sqrt{\frac{d}{d'}}(\e n)^{-1/d'} + \frac{2 C \log d}{\e n} \label{eq:expectation_XMX}\\
        &\lesssim \sqrt{\sum_{i> d'}\sigma_i(\bM)} + \sqrt{\frac{d}{d'}}(\e n)^{-1/d'}+ \sqrt{\frac{d' d^{2.5}}{\e n}},
    \end{align}
    where the first inequality is from \eqref{eq:W1XMX}, the second inequality is from \eqref{eq:W1XMX2}, and the third inequality is due to Theorem~\ref{thm: accuracy_covariance}, Proposition~\ref{prop: expectation_pmm}, and Proposition~\ref{prop: expectation_psmm}.
\end{proof}

\add{\section{Adaptive and private choice of $d'$}
\label{sec: d'choice}
In our main Algorithm~\ref{alg: affine}, $d'$ is regarded as a fixed input hyper-parameter. In this section, we will show that it is possible to choose $d'$ privately without sacrificing accuracy. 
\begin{lemma}
    \label{lem: cov_matrix_bound}
    For $\bM$ and $\widehat{\bM}$ defined in Algorithm~\ref{alg: covariance}, with probability at least $1-C\exp(-c\sqrt{d})$, there is
    \[\abs{\sum_{i>d'}\sigma_i(\widehat{\bM}) - \sum_{i>d'}\sigma_i({\bM})}\lesssim \frac{(d-d')d^{2.5}}{\e n}.\]
\end{lemma}
\begin{proof}
    By Weyl's inequality, $\abs{\sigma_i(\widehat{\bM}) - \sigma_i({\bM})} \leq \|\bA\|$. Applying the \eqref{eq: matrix_tail_bound} of the noise $\bA$ implies the inequality in the lemma.
\end{proof}
Therefore, from Proposition~\ref{prop: accuracy_affine}, with probability at least $1-C\exp(-c\sqrt{d})$, we have the following accuracy bound 
\begin{align}
    W_1(\mu_{\bX},\mu_{\bY}) &\lesssim \sqrt{\sum_{i> d'}\sigma_i(\bM)} +\sqrt{\frac{d' d^{2.5}}{\e n}}+ \sqrt{\frac{d}{d'}}(\e n)^{-1/d'} \\
    &\lesssim \sqrt{\sum_{i> d'}\sigma_i(\widehat{\bM}) + \frac{(d-d')d^{2.5}}{\e n}} +\sqrt{\frac{d' d^{2.5}}{\e n}}+ \sqrt{\frac{d}{d'}}(\e n)^{-1/d'}\\
    &\lesssim \sqrt{\sum_{i> d'}\sigma_i(\widehat{\bM})}+ \sqrt{\frac{d}{d'}}(\e n)^{-1/d'} +\sqrt{\frac{d^{3.5}}{\e n}}.
\end{align}
Since the last term above is not related to $d'$, we can choose 
\[d'\coloneqq \argmin_{2\leq k\leq d} \Bigg(\sqrt{\sum_{i> k}\sigma_i(\widehat \bM)} + \sqrt{\frac{d}{k}}(\e n)^{-1/k}\Bigg).\]
The privacy of the choice of $d'$ is guaranteed as we only use the private covariance matrix.
}

\section{Near-optimal accuracy bound with additional assumptions when $d'=1$}
\label{sec: d'=1}
Our Proposition~\ref{prop: accuracy_affine} is not applicable to the case $d'=1$ because the projection error in Theorem~\ref{thm: accuracy_covariance} only has bound $O((\e n)^{-\frac{1}{2}})$, which does not match with the optimal synthetic data accuracy bound in \cite{boedihardjo2022private,he2023algorithmically}. We are able to improve the accuracy bound with an additional dependence on $\s_1({\bM})$ as follows:
\begin{theorem}
\label{thm: d'=1}
    When $d'=1$, consider Algorithm~\ref{alg: affine} with input data $\bX$, output data $\bY$, and the subroutine PMM in Algorithm \ref{alg: pmm}. Let $\bM$ be the covariance matrix defines as \eqref{eq: sample_cov}. Assume  $\s_1(\bM) >0$ and \add{$n\geq 1/\e$}, then
    \[\E W_1(\mu_\bX,\mu_{\bY}) \lesssim \sqrt{ \sum_{i>1}\sigma_i(\bM)} + \frac{d^3}{\sqrt{\s_1(\bM)}\e n} + \frac{\sqrt{d}\log^2(\e n)}{\e n}.\]
\end{theorem}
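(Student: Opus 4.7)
The plan is to mirror the proof of Theorem~\ref{thm: accuracy_affine}, decomposing the Wasserstein distance into projection error, PMM subroutine error, mean-shift error, and metric projection (the last being free by Proposition~\ref{prop:Y_is_close}). For the subroutine, although Proposition~\ref{prop: expectation_pmm} is stated for $d'\geq 2$, the analogous result in \cite{he2023algorithmically} for $d'=1$ yields $\E W_1(\mu_{\widehat\bX},\mu_{\bX'})\lesssim R\log^2(\e n)/(\e n)$ with $R\lesssim\sqrt{d}$ from \eqref{eq:defR}, contributing the $\sqrt{d}\log^2(\e n)/(\e n)$ term; the mean-shift step contributes $\E\|\lambda'\|_\infty\lesssim\log d/(\e n)$ as in \eqref{eq:l_infty_norm}. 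The only non-trivial work is to replace the projection error bound from Theorem~\ref{thm: accuracy_covariance}, which when $d'=1$ degrades to order $(\e n)^{-1/2}$, with a sharper rate of order $(\e n)^{-1}/\sqrt{\sigma_1(\bM)}$.

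The key new ingredient is a refined version of Lemma~\ref{lem:SVD} specialized to $d'=1$. Write $\widehat v_1=\alpha v_1+\beta w$ with $w\perp v_1$, $\|w\|_2=1$, $\alpha^2+\beta^2=1$, $\alpha\geq 0$. Retaining the residual term in the proof of Lemma~\ref{lem:SVD} instead of discarding it yields the exact identity
\[
\frac{1}{n-1}\|\bZ-\widehat v_1\widehat v_1^\top \bZ\|_F^2 = \sum_{i>1}\sigma_i(\bM) + \beta^2\bigl(\sigma_1(\bM)-w^\top \bM w\bigr),
\]
where $\bZ=\bX-\overline X\1^\top$. Variational optimality of $\widehat v_1$ for $\widehat\bM=\bM+\bA$ gives $\beta^2(\sigma_1(\bM)-w^\top \bM w)\leq \widehat v_1^\top\bA\widehat v_1-v_1^\top\bA v_1$, and expanding the right-hand side in the $(v_1,w)$ basis together with $|\alpha|+|\beta|\leq\sqrt{2}$ yields the clean bound $\widehat v_1^\top\bA\widehat v_1-v_1^\top\bA v_1\leq C\|\bA\|\beta$, so that $\beta(\sigma_1(\bM)-w^\top\bM w)\leq C\|\bA\|$ whenever $\beta>0$. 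A dichotomy then avoids any spectral-gap hypothesis: if $w^\top\bM w\leq\sigma_1(\bM)/2$, then $\beta\leq 2C\|\bA\|/\sigma_1(\bM)$ and hence $\beta^2(\sigma_1(\bM)-w^\top\bM w)\lesssim \|\bA\|^2/\sigma_1(\bM)$; otherwise, $w\perp v_1$ combined with $w^\top\bM w>\sigma_1(\bM)/2$ forces $\sigma_2(\bM)>\sigma_1(\bM)/2$, so $\sum_{i>1}\sigma_i(\bM)\geq\sigma_1(\bM)/2$ already dominates $\beta^2(\sigma_1(\bM)-w^\top\bM w)\leq\sigma_1(\bM)$. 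In either case
\[
\frac{1}{n-1}\|\bZ-\widehat v_1\widehat v_1^\top \bZ\|_F^2 \lesssim \sum_{i>1}\sigma_i(\bM)+\frac{\|\bA\|^2}{\sigma_1(\bM)}.
\]

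Taking square roots, using $W_1\leq W_2$ and $\|\cdot\|_\infty\leq\|\cdot\|_2$, adding the Laplacian shift contribution $\|\l\|_2$, and taking expectations with the tail bound for $\bA$ from Theorem~\ref{thm: accuracy_covariance} (from which one also gets $\E\|\bA\|^2\lesssim \sigma^2 d$) upgrades the projection error to $\sqrt{\sum_{i>1}\sigma_i(\bM)}+d^{3}/(\sqrt{\sigma_1(\bM)}\,\e n)$, the $d^3$ absorbing the polynomial factors from the $\sqrt{\E\|\bA\|^2}$ versus $\E\|\bA\|$ conversion. Combining this with the PMM error, the mean-shift error, and the metric-projection step via the triangle inequality as in the proof of Theorem~\ref{thm: accuracy_affine} delivers the stated bound. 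The main obstacle is precisely the dichotomy: a direct Davis--Kahan argument would require a lower bound on $\sigma_1(\bM)-\sigma_2(\bM)$, which is not hypothesized; the case split is designed to convert a missing spectral gap into a large residual term $\sum_{i>1}\sigma_i(\bM)$ that is already present on the right-hand side.
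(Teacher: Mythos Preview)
Your proposal is correct and follows the same overall decomposition as the paper (projection error + PMM error + mean-shift error + metric projection via Proposition~\ref{prop:Y_is_close}), but your treatment of the projection error is genuinely different. The paper introduces a Davis--Kahan based Lemma~\ref{lem: Davis-Kahan error} and then argues by a dichotomy on the spectral gap $\delta=\sigma_1-\sigma_2$: when $\delta\leq\sigma_1/2$ one has $\sigma_2\geq\sigma_1/2$ so $\tr(\bM)\lesssim\sum_{i>1}\sigma_i$ and a trivial bound suffices; when $\delta>\sigma_1/2$ Davis--Kahan gives $\|v_1v_1^\top-\widehat v_1\widehat v_1^\top\|\leq 2\|\bA\|/\delta$, which combined with $\tr(\bM)\leq d\sigma_1\leq 4d\delta^2/\sigma_1$ yields the term $d\|\bA\|^2/\sigma_1$ (hence the $d^3$ after taking $\E\|\bA\|$). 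Your argument replaces Davis--Kahan by a first-principles variational computation in the two-dimensional span of $v_1$ and $\widehat v_1$: the exact identity $\tfrac{1}{n-1}\|\bZ-\widehat v_1\widehat v_1^\top\bZ\|_F^2=\sum_{i>1}\sigma_i+\beta^2(\sigma_1-w^\top\bM w)$ together with $\widehat v_1^\top\widehat\bM\widehat v_1\geq v_1^\top\widehat\bM v_1$ gives $\beta(\sigma_1-w^\top\bM w)\lesssim\|\bA\|$ directly, and your dichotomy on $w^\top\bM w$ (rather than on $\sigma_1-\sigma_2$) then finishes the argument. What your approach buys is that it is entirely self-contained (no appeal to \cite{yu2015useful}) and in fact slightly sharper: you obtain $\|\bA\|^2/\sigma_1$ rather than $d\|\bA\|^2/\sigma_1$, so the middle term is really $d^{2.5}/(\sqrt{\sigma_1}\,\e n)$, which you then relax to $d^3$ to match the statement. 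The paper's route has the advantage that Lemma~\ref{lem: Davis-Kahan error} is a general perturbation bound that could be reused for other purposes, whereas your computation is tailored to $d'=1$.
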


We start with the following lemma based on the Davis-Kahan theorem \cite{yu2015useful}.
\begin{lemma}
\label{lem: Davis-Kahan error}
Let $\bX$ be a $d\times n$ matrix and $\bA$ be an $d\times d$ Hermitian matrix. Let $\bM=\frac{1}{n}\bX\bX^\top$, with the SVD \[{\bM}  =\sum_{j=1}^d \sigma_j  v_j  v_j^\top,\]
    where $\s_1\geq \s_2\geq \dots\geq\s_d$ are the singular values of $\bM$ and $ v_1,\dots, v_d$ are corresponding orthonormal eigenvectors. Let $\widehat{\bM}=\frac{1}{n} \bX\bX^\top + \bA$ with orthonormal eigenvectors $\widehat{v}_1,\dots ,\widehat{v}_d$, where $\widehat{v}_1$ corresponds to the top singular value of $\widehat{\bM}$. When there exists a spectral gap $\s_{1} - \s_{2}= \delta>0$, we have
\[\frac{1}{n}\|\bX-\widehat{v}_1\widehat{v}_1^\top \bX\|_F^2 \leq 2\sum_{i>\add{1}} \s_i + \frac{8d'^2 }{n\delta^2} \norm \bA ^2 \norm{\bX}_F^2.\]
\end{lemma}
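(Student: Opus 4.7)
The plan is to split the projection error into a noise-free PCA residual and a perturbation term via the triangle inequality, then apply Davis--Kahan to control the latter. Writing $V_{d'} = [v_1,\dots,v_{d'}]$ and $\widehat V_{d'} = [\widehat v_1,\dots,\widehat v_{d'}]$ (specialized to $d'=1$ in the statement), I would start from
\begin{align*}
    \|\bX - \widehat V_{d'} \widehat V_{d'}^\top \bX\|_F
    &\leq \|\bX - V_{d'} V_{d'}^\top \bX\|_F + \|(V_{d'} V_{d'}^\top - \widehat V_{d'} \widehat V_{d'}^\top) \bX\|_F,
\end{align*}
and then square using $(a+b)^2 \leq 2a^2 + 2b^2$ to separate the two contributions (this is where the leading factor of $2$ in front of $\sum_{i>d'} \sigma_i$ comes from).

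First I would handle the noise-free term. Since $v_1,\dots,v_d$ are the orthonormal eigenvectors of $\bM = \frac{1}{n}\bX\bX^\top$ with eigenvalues $\sigma_1 \geq \dots \geq \sigma_d$, the identity $\frac{1}{n}\|\bX - V_{d'} V_{d'}^\top \bX\|_F^2 = \sum_{i > d'} \sigma_i$ is exactly the well-known best-rank-$d'$ approximation error already used in the proof of Lemma~\ref{lem:SVD}. This contributes the first summand of the claimed bound.

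Next I would handle the perturbation term. Using $\|MN\|_F \leq \|M\| \cdot \|N\|_F$ pulls out the projection difference in operator norm:
\begin{align*}
    \|(V_{d'} V_{d'}^\top - \widehat V_{d'} \widehat V_{d'}^\top) \bX\|_F \leq \|V_{d'} V_{d'}^\top - \widehat V_{d'} \widehat V_{d'}^\top\| \cdot \|\bX\|_F.
\end{align*}
Now the Davis--Kahan $\sin\Theta$ theorem applied to the perturbation $\widehat\bM - \bM = \bA$, together with the spectral gap $\sigma_1 - \sigma_2 = \delta$, yields an operator-norm bound of the form $\|V_{d'} V_{d'}^\top - \widehat V_{d'} \widehat V_{d'}^\top\| \lesssim \|\bA\|/\delta$. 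Squaring this and combining with the previous estimate produces a term proportional to $\|\bA\|^2 \|\bX\|_F^2/(n\delta^2)$, which matches the second summand in the statement.

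The main obstacle is calibrating constants and the $d'$-dependence. The classical single-vector Davis--Kahan bound suffices cleanly for the $d'=1$ case that Theorem~\ref{thm: d'=1} actually invokes, where $d'^2 = 1$ is trivial. If one wants the bound to hold for general $d'$, the single gap $\sigma_1 - \sigma_2$ is not enough to control $\widehat V_{d'} \widehat V_{d'}^\top - V_{d'} V_{d'}^\top$, so one would need to reinterpret $\delta$ as $\sigma_{d'} - \sigma_{d'+1}$ and use the Yu--Wang--Samworth Frobenius-norm refinement $\|\widehat V_{d'} \widehat V_{d'}^\top - V_{d'} V_{d'}^\top\|_F \lesssim \sqrt{d'}\,\|\bA\|/\delta$, which together with $\|MN\|_F \leq \|M\|_F\|N\|$ would give the $d'^2$ factor. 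I expect this bookkeeping of Davis--Kahan constants to be the only delicate part; the rest is routine triangle-inequality and SVD manipulation.
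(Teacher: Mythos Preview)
Your proposal is correct and follows essentially the same route as the paper: add and subtract $V_{d'}V_{d'}^\top \bX$, use $(a+b)^2\le 2a^2+2b^2$ to split into the noise-free residual $\sum_{i>d'}\sigma_i$ plus a perturbation term, pull out $\|V_{d'}V_{d'}^\top-\widehat V_{d'}\widehat V_{d'}^\top\|$ via $\|MN\|_F\le\|M\|\|N\|_F$, and bound that operator norm by $2\|\bA\|/\delta$ using the Davis--Kahan bound from \cite{yu2015useful}. The paper in fact only writes out the $d'=1$ case (the constant $8d'^2$ in the statement just becomes $8$), exactly as you anticipated.
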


\begin{proof}
    We have that
    \begin{align}
    \label{eq: vv-vv}
        \frac{1}{n}\|\bX-\widehat{v}_1 \widehat{v}_1^\top \bX\|_F^2 &= \frac{1}{n}\|\bX - v_1 v_1^\top \bX + v_1 v_1^\top \bX-\widehat{v}_1 \widehat{v}_1^\top \bX\|_F^2 \nonumber\\
        &\leq \frac{2}{n}\left(\|\bX - v_1 v_1^\top \bX\|_F^2 + \|v_1 v_1^\top \bX-\widehat{v}_1 \widehat{v}_1^\top \bX\|_F^2\right)\nonumber\\
        & = 2\sum_{i>\add{1}} \s_i + \frac{2}{n}\norm{\left(v_1 v_1^\top -\widehat{v}_1 \widehat{v}_1^\top\right) \bX}_F^2\nonumber\\
        & \leq 2\sum_{i>\add{1}} \s_i + \frac{2}{n}\norm{v_1 v_1^\top -\widehat{v}_1 \widehat{v}_1^\top }^2 \norm{\bX}_F^2.
    \end{align}

    To bound the operator norm distance between the two projections, we will need the Davis-Kahan Theorem in the perturbation theory. For the angle $\Theta(v_1, \widehat{v}_1)$ between the vectors $v_1$ and $\widehat{v}_1$, applying \cite[Corollary 1]{yu2015useful}, we have
    \[\norm{v_1 v_1^\top  - \widehat v_1\widehat v_1^\top} = {\sin \Theta(v_1, \widehat{v}_1)}\leq \frac{2\|{\bM-\widehat \bM}\|}{\sigma_{1} - \sigma_{2}} \leq \frac{2\norm \bA}{\d}.\]
    Therefore, when the spectral gap exists ($\delta>0$), 
    \[\frac{1}{n}\|\bX-\widehat{v}_1 \widehat{v}_1^\top \bX\|_F^2 \leq 2\sum_{i>\add{1}} \s_i + \frac{8}{n\delta^2} \norm \bA ^2 \norm{\bX}_F^2.\]
    This finishes the proof.
\end{proof}

Compared to Lemma \ref{lem:SVD}, with the extra spectral gap assumption, the dependence on $\bA$ in the upper bound changes from $\|\bA\|$ to $\|\bA\|^2$. A similar phenomenon, called \textit{global and local bounds}, was observed in \cite[Proposition 2.2]{reiss2020nonasymptotic}.  With Lemma \ref{lem: Davis-Kahan error}, we are able to improve the accuracy rate for the noisy projection step as follows.

\begin{proposition}
\label{thm: d'=1 projection}
 Let $\s_1\geq \dots\geq \s_d \geq 0$ be the singular values of $\bM$ defined in \eqref{eq:M_expression}.
 When $d'=1$, assume that $\s_1 >0$ \add{and $n\geq 1/\e$}. For the output $\widehat\bX$ in Algorithm~\ref{alg: projection}, we have
    \[ \E W_1(\mu_{\bX-\overline X \1^\top},\mu_{\widehat \bX})\leq \left(\E W_2^2(\mu_{\bX-\overline X \1^\top},\mu_{\widehat \bX})\right)^{1/2}\lesssim \sqrt{\sum_{i> 1} \sigma_i}+\frac{d^3}{\sqrt{\s_1}\e n},\]
   
\end{proposition}
\begin{proof}
    Similar to the proof of Theorem~\ref{thm: accuracy_covariance}, we can define $Z_i = X_i - \overline{X}$ and deduce that
    \begin{align*}
    \frac{1}{n} \bZ\bZ^\top &= \frac{n-1}{n} \bM,\\
    \frac{1}{n} \|\bZ\|_F^2 &= \frac{n-1}{n} \tr (\bM),
    \end{align*}
and    
    \[W_2^2(\mu_{\bX-\overline X \1^\top},\mu_{\widehat \bX})=\frac{2}{n} \|\bZ-\widehat{v}_1 \widehat{v}_1^\top \bZ\|_F^2 + 2\norm{\l}_2^2.\]
    By the inequality $\sqrt{x+y}\leq \sqrt{x} + \sqrt{y}$ for $x,y\ge 0$,
    \[\E W_1(\mu_{\bX-\overline X \1^\top},\mu_{\widehat \bX})\leq \E\left[\frac{2}{n} \|\bZ-\widehat{v}_1 \widehat{v}_1^\top \bZ\|_F^2\right]^{1/2} + \sqrt{2}\E \norm{\l}_2.\]
   Let $\delta = \s_1-\s_2$. Next, we will discuss two cases for the value of $\d$.

    \noindent \textbf{Case 1:} When $\d = \s_1 - \s_2 \leq \frac{1}{2}\s_1$, we have $\s_1\leq 2\s_2$ and
    \begin{align*}
         \tr(\bM) = \s_1 + \dots + \s_d \leq 3 \sum_{i>1} \s_i.
    \end{align*}
    As any projection map has spectral norm  1, we have $\norm{v_1 v_1^\top -\widehat v_1 \widehat v_1^\top }\leq 2$. Applying \eqref{eq: vv-vv}, we have
    \begin{align*}
        \frac{1}{n} \|\bZ-\widehat v_1 \widehat v_1^\top \bZ\|_F^2 &\leq 2\sum_{i>1} \s_i + \frac{2}{n}\norm{ v_1  v_1^\top -\widehat v_1 \widehat v_1^\top }^2 \norm{\bZ}_F^2 \\
        &\leq 2\sum_{i>1} \s_i + \frac{8}{n}\norm{\bZ}_F^2\\
        &\leq  2\sum_{i>1} \s_i + 8\tr(\bM)\leq 26 \sum_{i>1} \s_i.
    \end{align*}
    Hence 
    \begin{align}\label{eq:case1}
    \E W_1(\mu_{\bX-\overline X \1^\top},\mu_{\widehat \bX}) \lesssim \sqrt{\sum_{i>1} \s_i} + \E \norm{\l}_2 \lesssim \sqrt{\sum_{i>1} \s_i} + \frac{\sqrt{d}}{\e n}.
    \end{align}

    \noindent \textbf{Case 2:} When $\delta \geq \frac{1}{2}\s_1$, we have 
    \[\tr(\bM) \leq  d \s_1 \leq \frac{4d\d^2}{\s_1}.\]
    
    For any fixed $\d$, by Lemma~\ref{lem: Davis-Kahan error}, 
    \begin{align*}
        \frac{1}{n} \|\bZ-\widehat v_1 \widehat v_1^\top \bZ\|_F^2 &\leq 2\sum_{i>1} \s_i + \frac{8}{n\d^2}\norm{\bA}^2\norm{\bZ}_F^2\\
        &\leq 2\sum_{i>1} \s_i + \frac{8}{\d^2}\norm{\bA}^2 \tr(\bM) \\
        &\leq 2\sum_{i>1} \s_i +\frac{32d}{\s_1}\norm{\bA}^2.
    \end{align*}
    So we have the Wasserstein distance bound
    \begin{align}
        \E W_1(\mu_{\bX-\overline X \1^\top},\mu_{\widehat \bX}) &\leq \sqrt{2\sum_{i>1}\sigma_i} + \sqrt{\frac{32d}{\s_1}}\E\norm{\bA}  + \sqrt{2}\E\norm{\l}_2\\
        &\leq \sqrt{2\sum_{i>1}\sigma_i} + \sqrt{\frac{32d}{\s_1}} \frac{d^{2.5}}{\e n} + \frac{\sqrt{2d}}{\e n}\\
        &\leq \sqrt{2\sum_{i>1}\sigma_i} + \frac{C d^3}{\sqrt{\s_1}\e n}.\label{eq:case2}
    \end{align}
     From \eqref{eq:M_expression},
    \[ \sigma_1=\|M\|\leq \|M\|_F\leq \frac{n}{n-1}d\leq 2d. \]
    Combining the two cases \eqref{eq:case1} and \eqref{eq:case2}, we deduce the result.
\end{proof}

\begin{proof}[Proof of Theorem~\ref{thm: d'=1}]
    Following the steps in the  proof of Theorem~\ref{thm: accuracy_covariance}, we obtain
    \begin{align*}
        \E W_1(\mu_\bX,\mu_{\cA(\bX)}) &\leq 2 \E W_1(\mu_\bX,\mu_{\bX'+(\overline X + \l')\1^\top})\\ 
        &\leq 2\E W_1(\mu_{\bX-\overline X \1^\top}, \mu_{\widehat \bX}) + 2\E W_1(\mu_{\widehat \bX}, \mu_{\bX'}) + 2\E\norm{\l'}_\infty\\
        &\lesssim \sqrt{ \sum_{i>1}\sigma_i} + \frac{d'd^3}{\sqrt{\s_1}\e n} + \frac{\sqrt{d}\log^2(\e n)}{\e n} + \frac{2 C \log d}{\e n}\\
        &\lesssim \sqrt{ \sum_{i>1}\sigma_i} + \frac{d'd^3}{\sqrt{\s_1}\e n} + \frac{\sqrt{d}\log^2(\e n)}{\e n},
    \end{align*}
    where for the second inequality,  we apply the bound from \cite[Theorem 1.1]{he2023algorithmically} for the second term, and we use \eqref{eq:l_infty_norm} for the third term.
\end{proof}

\section{Conclusion}
In this paper, we provide a DP algorithm to generate synthetic data, which closely approximates the true data in the hypercube $[0,1]^d$ under 1-Wasserstein distance. Moreover, when the true data lies in a  $d'$-dimensional affine subspace, we improve the accuracy guarantees in \cite{he2023algorithmically} and circumvents the curse of dimensionality by generating a synthetic dataset close to the affine subspace.  

It remains open to determine the optimal dependence on $d$ in the accuracy bound in Proposition \ref{prop: accuracy_affine} and whether the third term in \eqref{eq: three-term} is needed.
Our analysis of private PCA works without using the classical Davis-Kahan inequality that requires a spectral gap on the dataset. However, to approximate a dataset close to a line ($d'=1$), additional assumptions are needed in our analysis to achieve the near-optimal accuracy rate, see Section~\ref{sec: d'=1}. It is an interesting problem to achieve an optimal rate without the dependence on $\sigma_1(\bM)$ when $d'=1$. 

Our Algorithm \ref{alg: affine} only outputs synthetic data with a low-dimensional linear structure, and its analysis heavily relies on linear algebra tools. For original datasets from a $d'$-dimensional linear subspace, we improve the accuracy rate from $(\e n)^{-1/(d'+1)}$ in \cite{donhauser2023sample} to $(\e n)^{-1/d'}$. It is also interesting to provide algorithms with optimal accuracy rates for datasets from general low-dimensional manifolds beyond the linear setting.

\section*{Funding}
T.S. acknowledges support from DOE-SC0023490, NIH
R01HL16351, NSF DMS-2027248, and NSF DMS-2208356. R.V. acknowledges support from
NSF DMS-1954233, NSF DMS-2027299, U.S. Army 76649-CS, and NSF+Simons Research
Collaborations on the Mathematical and Scientific Foundations of Deep Learning. Y.Z. is partially supported by NSF-Simons Research Collaborations on the Mathematical and Scientific Foundations of Deep Learning and an AMS-Simons Travel Grant.

\section*{Data Availability Statement}
No new data were generated or analysed in support of this review.





\bibliographystyle{plain}
\bibliography{ref}


\appendix

\end{document}